\pdfoutput=1
\documentclass{article}

\usepackage{arxiv}

\usepackage[utf8]{inputenc}
\usepackage[T1]{fontenc}
\usepackage{courier}
\usepackage{url}
\usepackage{booktabs}
\usepackage{array}
\usepackage{amsmath}
\usepackage{amssymb}
\usepackage{amsfonts}
\usepackage{amsthm}
\usepackage{nicefrac}
\usepackage{microtype}
\usepackage{graphicx}
\usepackage{algorithm}
\usepackage{algpseudocode}
\usepackage{enumitem}
\usepackage[numbers]{natbib}
\usepackage{hyperref}
\usepackage{cleveref}
\usepackage{doi}

\graphicspath{{./}{Figures/}}

\newtheoremstyle{upright}{6pt}{6pt}{\normalfont}{}{\bfseries}{.}{0.5em}{}
\theoremstyle{upright}
\newtheorem{proposition}{Proposition}
\newtheorem{lemma}{Lemma}
\newlist{props}{enumerate}{1}
\setlist[props]{label=(\alph*),leftmargin=2.2em,itemsep=2pt,topsep=3pt,parsep=0pt}

\DeclareMathOperator*{\TopK}{TopK}
\newcommand{\Pop}{\mathrm{Pop}}
\newcommand{\SCP}{\mathrm{SCP}}

\newcommand{\CSR}{\mathrm{CSR}}
\newcommand{\cF}{\mathcal{F}}
\newcommand{\cC}{\mathcal{C}}
\newcommand{\cP}{\mathcal{P}}
\newcommand{\cK}{\mathcal{K}}

\newcommand{\figw}{0.82\textwidth}
\newcommand{\figwb}{0.78\textwidth}

\title{FairFRESH: A Neuro-Symbolic Framework for Fair and Explainable
Household Food Recommendation\thanks{Accepted and presented at the 8th
International Conference on Communication and Intelligent Systems (ICCIS
2026), BITS Pilani, K K Birla Goa Campus, India; the conference version will
appear in Springer's \emph{Lecture Notes in Networks and Systems}. This arXiv
version adds the analysis of the constraint penalty and of greedy
allocation, the bargaining characterization of the allocation rules
(Section~\ref{sec:allocation}, Appendix~\ref{app:proofs}), the extension to
shared household meals and their rotation (Section~\ref{sec:shared_meals}),
and the corresponding experiments.}}

\date{}

\author{
	Kodur Sai Vinay Sathvik, Kakoli Bora, and Tejaswini R. M. \\
	Department of Artificial Intelligence and Machine Learning\\
	BNM Institute of Technology\\ \texttt{sathvik2004@gmail.com}, \texttt{kakolibora@bnmit.in}, \texttt{tejaswinirm@bnmit.in} \\
}

\renewcommand{\headeright}{}
\renewcommand{\undertitle}{}
\renewcommand{\shorttitle}{FairFRESH}

\hypersetup{
pdftitle={FairFRESH: A Neuro-Symbolic Framework for Fair and Explainable Household Food Recommendation},
pdfsubject={cs.IR, cs.AI, cs.GT},
pdfauthor={Kodur Sai Vinay Sathvik, Kakoli Bora, Tejaswini R. M.},
pdfkeywords={food recommendation, fair division, multi-agent recommendation, pantry-aware recommendation, neuro-symbolic reasoning, Nash social welfare, envy-freeness},
}

\begin{document}
\maketitle

\begin{abstract}
Recommender systems typically serve one user at a time. However in households, members share a finite pantry and follow different dietary rules, making independent recommendations prone to resource conflicts and dietary violations. We formulate household food recommendation as constrained fair division and present \textbf{FairFRESH}, a neuro-symbolic framework that learns per-member utilities, allocates foods under explicit welfare objectives, enforces dietary rules through a deterministic filter, and explains recommendations using Shapley and Banzhaf attributions. Theoretically, we show that no finite penalty guarantees compliance near a dietary-rule threshold, establish approximation guarantees for greedy allocation under utilitarian and smoothed Nash objectives in the stock-free model before filtering, and characterize the scale invariance of Nash welfare. We also use the minimax theorem to analyze rotating shared meals. On NHANES-derived data, the learned utility model achieves Recall@20 of 0.293, outperforming five learned recommenders (at most 0.244) but not a previous-day repetition heuristic (0.321). On held-out four-person households, Nash allocation reduces Gini disparity by 42\% relative to conflict resolution while improving Recall@20 from 0.180 to 0.197. It comes within 2\% of an upper bound on the smoothed Nash optimum, and rescaling one member’s utilities changes at most 1.5\% of assignments. All recommendations satisfy the encoded dietary rules, while attribution scores correlate with utility loss upon component removal ($r=0.76$). In simulation, shared meals reduce pantry use by about one-third, and weekly rotation increases the worst-off member’s utility by 38\% over the best fixed meal.
\end{abstract}

\keywords{food recommendation \and fair division \and multi-agent
recommendation \and pantry-aware recommendation \and neuro-symbolic reasoning
\and Nash social welfare \and envy-freeness}

\section{Introduction}
\label{sec:intro}

Recommender systems usually model one user at a time. That assumption suits
film or music, but not food: meals are planned around a shared pantry and
shaped by the dietary needs of everyone at the table. When two members of a
household are recommended the same scarce item, at most one of them can use
it, however highly a per-user model scores it for both. When members follow
different dietary rules, the same pantry supports different safe choices for
each of them (Fig.~\ref{fig:motivation}). Household food recommendation is
therefore not a ranking problem repeated for every member but an allocation
problem over a shared, finite resource.

Existing systems address parts of this problem. Health-aware recommenders
optimize nutrition jointly with preference, as in
MOPI-HFRS~\cite{zhang2025mopi}, or reason over food knowledge graphs, as in
SwissFKG~\cite{freitas2025swiss}, but they score each user against that
user's own history. Caldeira et al.~\cite{Caldeira2018} bring pantry state
into the model, but only as a per-user ranking feature, which treats a
shared, finite resource as if it were unlimited. Group recommenders
aggregate preferences into one package that every member
shares~\cite{Masthoff2004,Serbos2017}, so they never divide a resource that
one member's choice depletes for another. Dietary safety is typically
encoded as a learned penalty or a soft objective, which a strong taste or
popularity signal can outweigh, and explanations, where present, address one
user's ranking rather than a household's allocation.

Fair division studies how to divide a finite set of goods among agents
with different preferences so that the outcome is efficient and
equitable~\cite{mishra2023fair}, which is the situation of a household
sharing one pantry. We introduce \textbf{FairFRESH}, a neuro-symbolic
framework that treats household food recommendation as fair division of
indivisible goods under per-member dietary rules.

A three-stage pipeline produces each household recommendation. In the first
stage, a learned utility model combines taste, pantry, and
retrieval-augmented dietary encoders to score every member--food pair. In
the second stage, an explicit welfare rule (utilitarian, max-min, or Nash)
allocates foods from the resulting utility matrix, so that no item is
promised to two members. In the third stage, a deterministic symbolic
filter removes any food that breaks a member's dietary rule. A post-hoc
module then explains the verified recommendations with Shapley and Banzhaf
attributions over the pipeline's components. This division of labor allows
the learned components to rank, while the filter alone decides safety and
the explanation module never feeds back into a decision. Every final
recommendation therefore satisfies the encoded dietary rules, whatever the
learned scores.

We evaluate FairFRESH on NHANES-derived households of two to six members,
comparing it with five recommender baselines and five allocation strategies.
The evaluation addresses four research questions. \textbf{RQ-A}: does a
food's availability, measured by the member's previous-day consumption,
carry predictive signal beyond taste?
\textbf{RQ-B}: how does the allocation rule change equity when members
compete for one pantry, and at what cost in accuracy? \textbf{RQ-C}: does
retrieval-augmented encoding of dietary constraints cover members'
requirements without weakening the safety guarantee? \textbf{RQ-D}: can
cooperative-game attributions explain household recommendations faithfully,
without changing the decisions they explain?

\paragraph{\textbf{Contributions}}
This paper makes the following contributions.
\begin{enumerate}
\item We introduce FairFRESH, which formulates household food
recommendation as fair division of a shared, finite pantry: servings are
private goods allocated under per-member safe sets and a cap of $K$ items
(Section~\ref{sec:problem}). We prove that no finite penalty weight can
certify safety near a rule threshold (Proposition~\ref{prop:scp}), so
FairFRESH enforces the rules with a deterministic filter, and we show that,
at the optimum, allocating within safe sets dominates filtering afterwards
(Proposition~\ref{prop:placement}).
\item The key component of FairFRESH is an explicit allocation layer. One
greedy procedure implements the utilitarian, max-min, and Nash rules; we
bound it for the utilitarian and smoothed Nash objectives before filtering
(Proposition~\ref{prop:greedy}), find it within 1--5\% of exact optima in
practice, and show through bargaining theory that only the exact Nash rule
is invariant to how members' utilities are scaled
(Proposition~\ref{prop:bargain}). Shapley and
Banzhaf attributions then explain the household allocation, not one user's
ranking (Section~\ref{sec:explainability-method}).
\item We extend FairFRESH to dishes cooked once and shared by the household.
Using the minimax theorem, we show that rotating shared dishes over days can
only help the worst-off member, and that a linear program computes the best
rotation (Proposition~\ref{prop:rotation}).
\item We provide a detailed empirical evaluation on households of two to six
members, with all allocation results on users disjoint from training. The
utility model ranks above five learned recommenders, and a repeat
heuristic shows that previous-day consumption carries most of the signal.
Under exclusive household allocation, Nash allocation lowers Gini by 42\%
relative to conflict resolution with higher Recall@20 and changes at most
1.5\% of its assignments when utilities are rescaled. Shared meals save
about a third of the pantry
(Sections~\ref{sec:results} and~\ref{sec:shared_meals}).
\end{enumerate}

\begin{figure}[t]
\centering
\includegraphics[width=\figw]{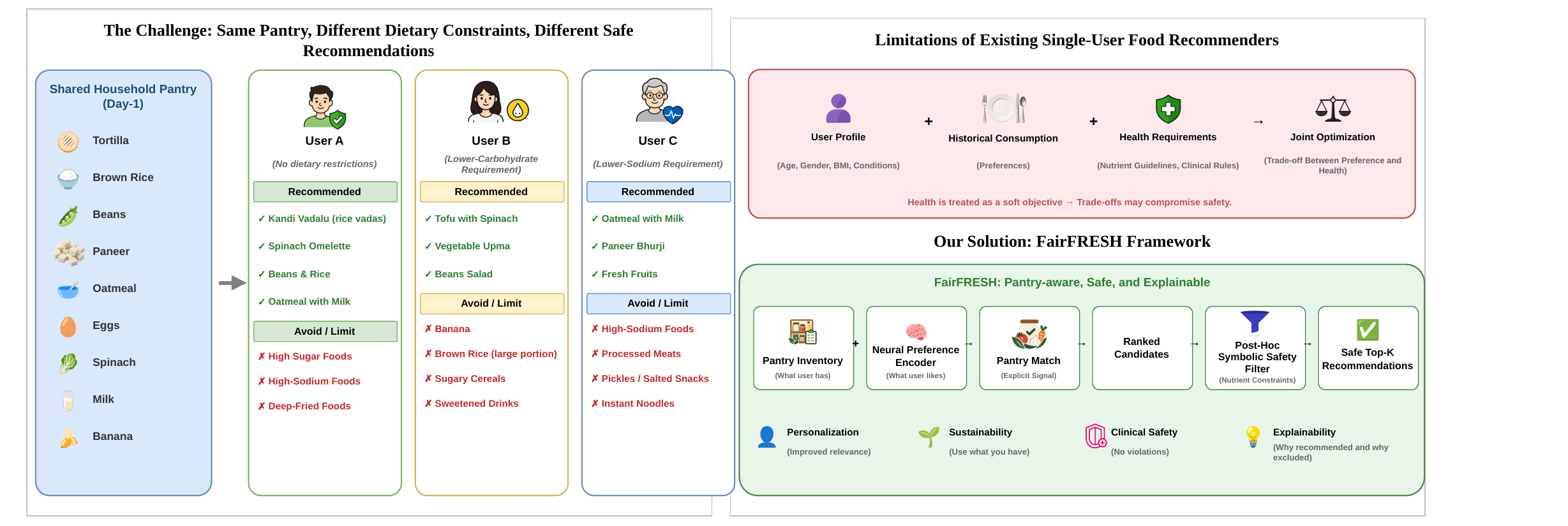}
\caption{Motivation. One shared pantry yields different safe
recommendations for members with different dietary rules, so household
recommendation is an allocation problem rather than a ranking repeated for
each member.}
\label{fig:motivation}
\end{figure}

\section{Related Work}
\label{sec:related}

Most food recommenders model one user at a time. Freyne and
Berkovsky~\cite{Freyne2010} personalize recipes from ratings, Ge et
al.~\cite{Ge2015} add a health objective, and Trattner and
Elsweiler~\cite{Trattner2017WWW} study the healthiness of online recipes.
Food knowledge graphs~\cite{Haussmann2019,Ma2024} and multimodal
input~\cite{Deng2026} encode nutritional semantics, and MOPI-HFRS
\cite{zhang2025mopi} optimizes nutrition jointly with preference. Where
ingredient availability appears~\cite{Caldeira2018}, it is a per-user
ranking feature. None of these systems divides a pantry that one member's
choice depletes for another.

Fairness in recommender systems usually concerns exposure across items or
disparities across large user groups~\cite{Burke2017}. A household is
instead a small, fully known group sharing one pool of resources. Group
recommenders aggregate member preferences~\cite{Masthoff2004} or impose
proportionality and envy-freeness on a shared package~\cite{Serbos2017}, but
every member receives the same items, so each item acts as a public good.
These systems do not model private servings drawn from a common stock.

Fair division studies exactly this setting. For indivisible goods with
additive valuations~\cite{mishra2023fair}, envy-free allocations need not
exist, but EF1 allocations do, and round robin finds one in polynomial
time~\cite{caragiannis2019unreasonable}. Without further constraints, every
Nash-optimal allocation is Pareto-optimal and
EF1~\cite{caragiannis2019unreasonable}, but computing one is
APX-hard~\cite{nguyen2014computing,lee2017apx}, and the max-min optimum is
NP-hard to approximate within any factor below
two~\cite{bezakova2005allocating}. Cardinality limits change which of these
guarantees survive~\cite{suksompong2021constraints,biswas2018fair}.

Reviewer assignment poses a closely related constrained problem, where Stelmakh et
al.\ maximize the quality of the most disadvantaged paper under load
constraints~\cite{stelmakh2021peerreview4all}; members play the role of
papers and foods that of reviewers. When every agent consumes an item,
allocation becomes a public decision judged by how well the common choice
serves each agent~\cite{conitzer2017fair,fain2018fair}. This literature
assumes known valuations and has not been connected to learned utilities or
dietary constraints.

Neuro-symbolic methods such as logic tensor
networks~\cite{Carraro2022AIxIA,Badreddine2022AI} relax symbolic constraints
into differentiable objectives, which a high taste or popularity score can
outweigh. Post-hoc explanation methods such as the Shapley
value~\cite{Lundberg2017} and the Banzhaf index~\cite{banzhaf1965weighted},
evaluated with the faithfulness correlation of Bhatt et
al.~\cite{bhatt2020evaluating}, attribute a single prediction to its inputs.
Neither line addresses a guarantee or an explanation at the level of a
household allocation.

FairFRESH takes learned per-member utilities from the recommendation
literature and passes them to a fair-division allocation layer, so the
pantry is divided rather than scored separately for each member. Its soft
penalty comes from the same symbolic rules as a deterministic filter, which
alone decides compliance (Sections~\ref{sec:utility}
and~\ref{sec:safety}). Its attributions are defined over the household
allocation rather than one user's ranking, and shared dishes enter as public
goods (Section~\ref{sec:shared_meals}).

\section{Methodology}
\label{sec:methodology}

Figure~\ref{fig:fairfresh_architecture} shows the architecture of
FairFRESH. The pipeline has four stages: (1) the utility model, (2) the
allocation layer, (3) the symbolic safety filter, and (4) the post-hoc
explainability module.

Household inputs feed the utility model, whose scores form the household
utility matrix. The allocation layer assigns foods from that matrix under
an explicit welfare rule, and the safety filter removes any food that breaks
its member's dietary rules. The explainability module then explains the
verified recommendations without feeding back into any earlier stage. We
first define the problem and then describe each stage.

\begin{figure}[ht]
\centering
\includegraphics[width=\figwb]{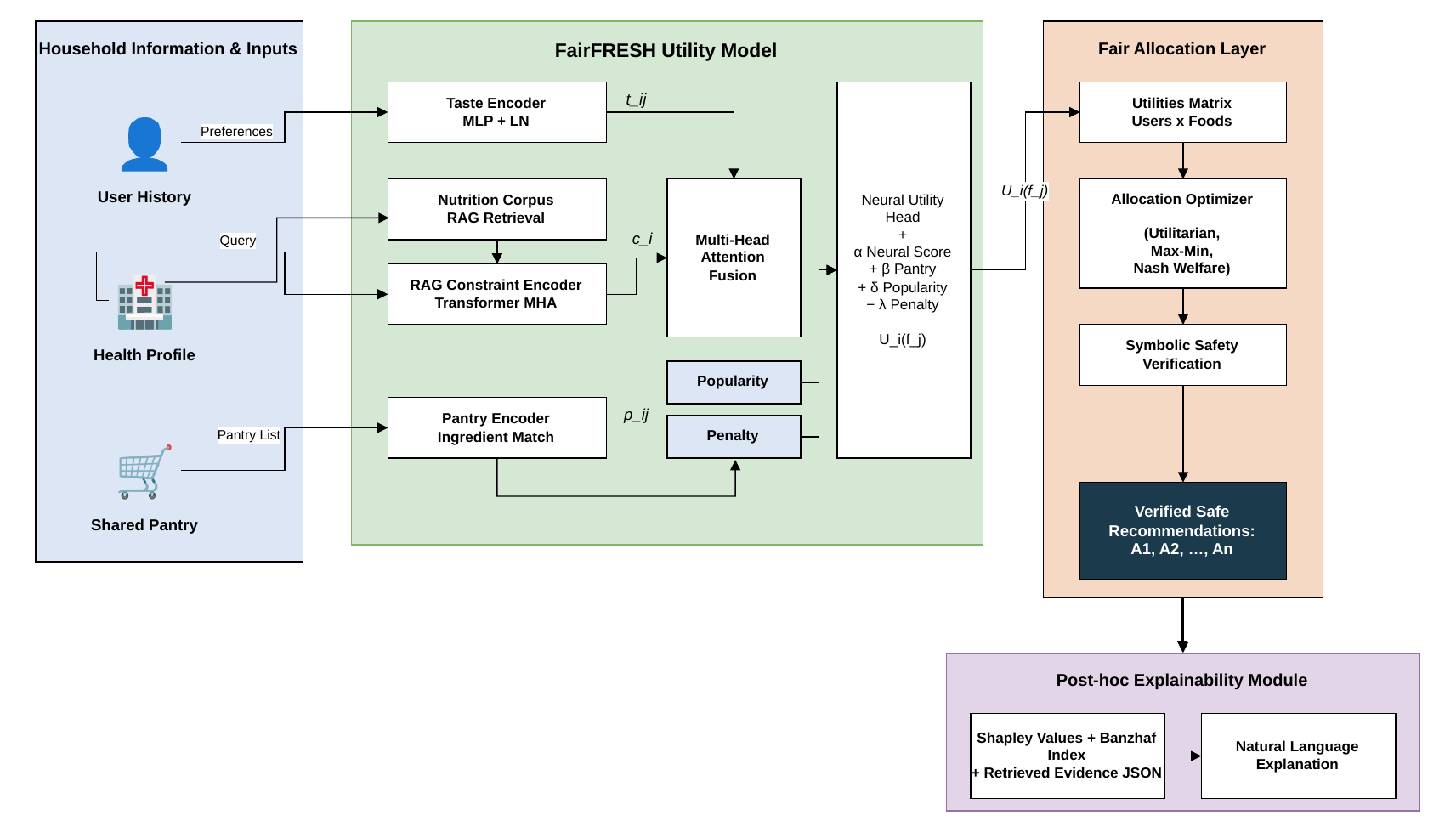}
\caption{FairFRESH. Taste, retrieval-augmented dietary, and pantry
encoders, fused by multi-head attention, produce a utility for every
member--food pair. An allocation rule (utilitarian, max-min, or Nash)
assigns foods from the utility matrix, a symbolic filter removes unsafe
foods, and a post-hoc module turns Shapley and Banzhaf attributions into
natural-language explanations without affecting any decision.}
\label{fig:fairfresh_architecture}
\end{figure}

\subsection{Preliminaries and Problem Definition}
\label{sec:problem}
\textbf{Model.} A household is a set of members $N=\{1,\ldots,n\}$ sharing a fixed pantry
$\cP$. Each member $i$ contributes stated preferences, an interaction
history, and a health profile (conditions, dietary tags, and allergies) that
determines a set of dietary rules $\cC_i$. Over a food catalog
$\cF=\{f_1,\ldots,f_m\}$, FairFRESH learns a utility $U_i(f)\in(0,1)$ for
every member--food pair (Section~\ref{sec:utility}) and recommends to each
member a bundle $A_i\subseteq\cF$. Bundles are disjoint, because an item
allocated to one member is unavailable to another, and each holds at most
$K$ items; Section~\ref{sec:shared_meals} relaxes disjointness for dishes
cooked once and shared by the household. Since $nK$ is far smaller than $m$,
allocations are partial: most of the catalog stays unallocated. Utilities
are additive, $U_i(A_i)=\sum_{f\in A_i}U_i(f)$.

\textbf{Dietary rules.} Each rule $c\in\cC_i$ is expressed as a normalized
slack $g_c(f)$, with $g_c(f)\leq 0$ meaning that $f$ satisfies the rule. For
a nutrient limit, $g_c(f)=x_{f,c}/\tau_c-1$, where $x_{f,c}$ is the amount of
the limited nutrient in a serving of $f$ and $\tau_c$ is the per-serving
limit; for an exclusion such as an allergen, $g_c(f)=1$ if $f$ contains the
excluded ingredient and $-1$ otherwise. Food $f$ is safe for member $i$ if
$g_c(f)\leq 0$ for every $c\in\cC_i$. We write $S_i$ for member $i$'s safe set
and $V_i=\cF\setminus S_i$ for its complement.

\textbf{Fairness and efficiency.} We use the standard notions of fair
division~\cite{mishra2023fair}, adapted to safe sets. A member is never
served an unsafe food, so when comparing bundles we value them by their safe
part, $U_i^{S}(B)=U_i(B\cap S_i)$. An allocation $A$ is \emph{envy-free} (EF)
if $U_i^{S}(A_i)\geq U_i^{S}(A_j)$ for all $i,j$, and \emph{envy-free up to
one good} (EF1) if, for all $i,j$ with $A_j\neq\varnothing$, some
$f\in A_j$ satisfies $U_i^{S}(A_i)\geq U_i^{S}(A_j\setminus\{f\})$. It is
\emph{Pareto-optimal} (PO) if no feasible allocation makes some member better
off and no member worse off. We compare three welfare objectives, the
utilitarian, egalitarian (max-min), and Nash social welfare:
\begin{equation}
W_{\mathrm{UT}}(A)=\sum_{i=1}^{n}U_i(A_i),\qquad
W_{\mathrm{EG}}(A)=\min_{i}U_i(A_i),\qquad
W_{\mathrm{NSW}}(A)=\Big(\prod_{i=1}^{n}U_i(A_i)\Big)^{1/n}.
\label{eq:rules}
\end{equation}

\textbf{Matroid structure.} Encode an allocation as a set
$X\subseteq\Omega=N\times\cF$ of member--food pairs, with
$A_i=\{f:(i,f)\in X\}$. Feasibility requires that (i) each food appears in at
most one pair and (ii) each member appears in at most $K$ pairs. Each
condition defines a partition matroid on $\Omega$, so the feasible
allocations are exactly the common independent sets of two matroids.
Section~\ref{sec:allocation} uses this structure to bound the quality of
greedy allocation.

\subsection{Utility Model}
\label{sec:utility}
The utility of food $f_j$ for member $i$ combines three encoders.

\textbf{Taste encoder.} The user embedding $\mathbf{e}_{u_i}$ is a TF-IDF
weighted mean of sentence embeddings of the member's previous-day foods, so
a user unseen in training is represented by the same function of their
history. We concatenate it with the food embedding $\mathbf{e}_{f_j}$, the
sentence embedding of the food's description, and pass the result through a
small feedforward network with GELU activations and layer normalization,
giving a taste representation $\mathbf{t}_{ij}\in\mathbb{R}^{128}$ that
captures preference independently of pantry or dietary state.

\textbf{Retrieval-augmented constraint encoder.} We represent dietary
requirements by retrieving evidence for a health query $q_i$ from a
nutrition corpus $D$, ranked by cosine similarity:
\begin{equation}
R_i=\TopK_{r\in D}\ \frac{E(q_i)^{\top}E(r)}{\lVert E(q_i)\rVert\,\lVert E(r)\rVert}.
\label{eq:retrieval}
\end{equation}
The retrieved passages are projected into the latent space and passed
through self-attention, so that, for example, a note on potassium
restriction can interact with one on kidney function, and are then pooled by
learned attention into a constraint vector $\mathbf{c}_i\in\mathbb{R}^{128}$.
Grounding this vector in retrieved evidence rather than in parameters alone
allows the explanation module (Section~\ref{sec:explainability-method}) to
cite the specific evidence behind a recommendation.

\textbf{Pantry (availability) encoder.} NHANES records no household
inventory, so we use what a member ate the day before as a proxy for what is
at hand. With $H_i$ the foods in member $i$'s previous-day record,
\begin{equation}
p_{ij}=\mathbf{1}\big[f_j\in H_i\big],
\label{eq:pantry}
\end{equation}
which we project into the same latent space as $\mathbf{t}_{ij}$ and
$\mathbf{c}_i$, giving the model a per-pair availability signal to weigh
against taste and dietary fit.

\textbf{Fusion and utility.} We stack the three vectors and apply multi-head
attention, so the model can learn how to weight each signal for a given
pair. The result is flattened, combined with explicit recency and popularity
features $\mathbf{z}_{ij}$, and passed through an MLP that outputs a neural
score $s_{ij}$. The utility combines this score with the raw pantry match and
popularity and subtracts a soft constraint penalty:
\begin{equation}
U_i(f_j)=\sigma\big(\underbrace{\alpha s_{ij}+\beta p_{ij}+\delta\Pop(f_j)}_{\ell_i(f_j)}-\lambda\,\SCP(u_i,f_j)\big),
\label{eq:utility}
\end{equation}
where $\alpha,\beta,\delta,\lambda$ weight the neural, pantry, popularity,
and penalty terms and $\ell_i(f_j)$ denotes the unpenalized logit.

\textbf{Soft constraint penalty.} The penalty is built from the same rules
$\cC_i$ that the safety filter enforces:
\begin{equation}
\SCP(u_i,f)=\sum_{c\in\cC_i}w_c\,\rho_\kappa\big(g_c(f)\big),
\qquad
\rho_\kappa(z)=\frac{1}{\kappa}\log\big(1+e^{\kappa z}\big),
\label{eq:scp}
\end{equation}
with rule weights $w_c>0$ and a temperature $\kappa>0$. Because
$\max(0,z)\leq\rho_\kappa(z)\leq\max(0,z)+\kappa^{-1}\ln 2$, the penalty is a
smoothed form of the exact penalty $\sum_c w_c\max(0,g_c(f))$ and converges
to it as $\kappa\to\infty$; the hard filter is the all-or-nothing form of the
same rules. With unit rule weights, $w_c=1$, and logistic noise in the
nutrient data, $-\kappa\,\SCP$ is the log-probability that the food is safe, so the penalized utility approximates
the probability that a food is both chosen and safe when $\lambda=\kappa$
(Appendix~\ref{app:scp}); this reading is a special case and is not needed
for Proposition~\ref{prop:scp}. The penalty supplies gradients that a
step-function rule cannot, but it cannot replace the filter.

\begin{proposition}[Limits of a soft penalty]
\label{prop:scp}
Fix member $i$. Let $|\ell_i(f)|\leq L$ for all $f$, and let
$w_{\min}=\min_{c\in\cC_i}w_c$.
\begin{props}
\item \emph{Clear violations.} Let $\gamma>0$. If
\[
\lambda\Big(w_{\min}\,\rho_\kappa(\gamma)-\frac{\ln 2}{\kappa}\sum_{c\in\cC_i}w_c\Big)>2L,
\]
then every food that violates a rule of member $i$ by at least $\gamma$
ranks below every safe food.
\item \emph{Near-threshold violations.} Let $f^+$ be safe with $g_c(f^+)=0$,
and let $f^-$ differ from $f^+$ only in rule $c$, with $g_c(f^-)=\eta>0$.
For every finite $\lambda$, the unsafe $f^-$ ranks above $f^+$ whenever
$\ell_i(f^-)-\ell_i(f^+)>\lambda w_c\eta$.
\end{props}
\end{proposition}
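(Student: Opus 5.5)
Since $\sigma$ is strictly increasing, ranking by $U_i(f)$ is the same as ranking by the penalized logit $\ell_i(f)-\lambda\,\SCP(u_i,f)$. Both parts therefore reduce to comparing penalized logits. The only inputs needed are the bound $|\ell_i|\le L$, the sandwich $\max(0,z)\le\rho_\kappa(z)\le\max(0,z)+\kappa^{-1}\ln 2$ stated after \eqref{eq:scp}, and the fact that $\rho_\kappa$ is increasing.

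For part (a), take a safe food $f^+$ and a food $f^-$ with $g_{c_0}(f^-)\ge\gamma$ for some $c_0\in\cC_i$. I will bound the two penalties separately.

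For the safe food, every $g_c(f^+)\le 0$, so $\rho_\kappa(g_c(f^+))\le\rho_\kappa(0)=\kappa^{-1}\ln 2$. Hence
\[
\SCP(u_i,f^+)\le\frac{\ln 2}{\kappa}\sum_{c}w_c .
\]

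For the unsafe food, every term of $\SCP$ is nonnegative, so I drop all terms except $c_0$. Monotonicity of $\rho_\kappa$ then gives
\[
\SCP(u_i,f^-)\ge w_{c_0}\rho_\kappa(\gamma)\ge w_{\min}\rho_\kappa(\gamma).
\]

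Subtracting the two bounds,
\[
\lambda\big(\SCP(u_i,f^-)-\SCP(u_i,f^+)\big)>2L\ge\ell_i(f^-)-\ell_i(f^+).
\]
Rearranging shows that $f^-$ has a strictly smaller penalized logit than $f^+$, which proves (a). The hypothesis is stated exactly in the form this subtraction produces, so nothing more is needed.

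For part (b), the foods $f^-$ and $f^+$ share every slack except rule $c$. So in the difference of penalties all terms cancel except the $c$-term:
\[
\SCP(u_i,f^-)-\SCP(u_i,f^+)=w_c\big(\rho_\kappa(\eta)-\rho_\kappa(0)\big).
\]
I need to show this is at most $w_c\eta$. The cleanest route is to note that $\rho_\kappa'(z)=1/(1+e^{-\kappa z})\in(0,1)$, so $\rho_\kappa$ is $1$-Lipschitz. By the mean value theorem, $\rho_\kappa(\eta)-\rho_\kappa(0)\le\eta$.

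It follows that the penalty gap is at most $\lambda w_c\eta$. Hence, whenever $\ell_i(f^-)-\ell_i(f^+)>\lambda w_c\eta$, the penalized logit of $f^-$ strictly exceeds that of $f^+$. Since $\sigma$ is increasing, $U_i(f^-)>U_i(f^+)$. Because $\lambda w_c\eta\to0$ as $\eta\to0$, for any fixed finite $\lambda$ a small enough violation, paired with a logit advantage that fits within $|\ell_i|\le L$, flips the ranking. This is the sense of ``no finite penalty certifies safety.''

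The only step that is not pure bookkeeping is the Lipschitz bound in (b). I will need to be careful to use the derivative bound rather than the upper sandwich inequality. The sandwich bound would leave an extra $\kappa^{-1}\ln 2$ term and would not give the clean threshold $\lambda w_c\eta$.
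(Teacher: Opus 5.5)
Your proof is correct and follows the paper's argument step for step: the reduction to penalized logits, the two one-sided penalty bounds in (a) (you use monotonicity to get $\rho_\kappa(g)\le\rho_\kappa(0)$ where the paper cites the sandwich bound, which gives the same inequality), and in (b) the cancellation of all terms except the $c$-term plus the $1$-Lipschitz bound from $\rho_\kappa'(z)=\sigma(\kappa z)\in(0,1)$. The only thing you leave out is the paper's closing remark that the condition in (a) cannot hold as $\gamma\to0$; that remark is commentary, not part of the statement.
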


Part~(a) shows that a learned penalty can push clear violations down the
ranking, but only when the logit is bounded, and its condition fails as
$\gamma\to0$. Part~(b) shows that, for any finite weight, foods just over a
limit cannot be separated from foods just under it, since the required
margin $\lambda w_c\eta$ vanishes as $\eta\to0$. Certifying
near-threshold foods therefore requires the deterministic filter of
Section~\ref{sec:safety}. Proofs are given in Appendix~\ref{app:proofs}.

\subsection{Allocation Rules and Greedy Allocation}
\label{sec:allocation}
Independent top-$K$ lists ignore that the pantry is shared, so FairFRESH
selects a feasible allocation with one of the welfare rules of
Eq.~\ref{eq:rules}. The utilitarian rule maximizes total utility but may leave a member
underserved; the egalitarian (max-min) rule protects the worst-off member at
some cost in total utility; and the Nash rule balances the two, because the
product falls sharply when any member's utility approaches zero.
Proposition~\ref{prop:bargain} below characterizes this balance.

\textbf{Greedy allocation.} Exact optimization is intractable for the Nash
and egalitarian rules (Section~\ref{sec:related}), so all three rules share
one greedy procedure (Algorithm~\ref{alg:greedy}). Starting from empty
bundles, each step gives one member its best remaining food, until every
member holds $K$ items or no food remains; the rules differ only in which
member moves next. The utilitarian rule picks the member whose best
remaining food has the highest utility, which adds the feasible pair of
largest marginal gain in $W_{\mathrm{UT}}$. The Nash rule does the same for
the smoothed objective
\begin{equation}
\Phi_\epsilon(A)=\sum_{i=1}^{n}\log\Big(1+\frac{U_i(A_i)}{\epsilon}\Big),\qquad \epsilon>0,
\label{eq:smoothnash}
\end{equation}
which keeps gains finite while bundles are empty; because the set of
allocations is finite, for sufficiently small $\epsilon$ its maximizers also
maximize the Nash product. The gain of giving $f$ to $i$ is
$\log\big(1+U_i(f)/(\epsilon+U_i(A_i))\big)$, so the Nash rule picks the
member with the largest ratio $U_i(f_i^\star)/(\epsilon+U_i(A_i))$, where
$f_i^\star$ is that member's best remaining food.

The egalitarian rule gives
the next food to the currently worst-off member, a max-min heuristic rather
than a marginal-gain step. Greedy allocation is thus an adaptive picking
sequence, of which round robin is the fixed-order counterpart. With each
member's candidates sorted once, Algorithm~\ref{alg:greedy} runs in
$O(nm\log m+n^{2}K)$ time.

\begin{algorithm}[t]
\caption{Greedy household allocation as an adaptive picking sequence}
\label{alg:greedy}
\begin{algorithmic}[1]
\Require utilities $U_i(f)$; cap $K$; rule $\in\{\mathrm{UT},\mathrm{EG},\mathrm{NSW}\}$; $\epsilon>0$
\State $A_i\gets\varnothing$ for every member $i$; \ $F_{\mathrm{rem}}\gets\cF$
\While{$F_{\mathrm{rem}}\neq\varnothing$ and some member has $|A_i|<K$}
  \State $I\gets\{i:|A_i|<K\}$; \ $f_i^\star\gets\arg\max_{f\in F_{\mathrm{rem}}}U_i(f)$ for every $i\in I$
  \State $i^\star\gets\begin{cases}\arg\max_{i\in I}U_i(f_i^\star) & \text{if UT}\\ \arg\min_{i\in I}U_i(A_i) & \text{if EG}\\ \arg\max_{i\in I}U_i(f_i^\star)/(\epsilon+U_i(A_i)) & \text{if NSW}\end{cases}$ \Comment{ties: lowest index}
  \State $A_{i^\star}\gets A_{i^\star}\cup\{f^\star_{i^\star}\}$; \ $F_{\mathrm{rem}}\gets F_{\mathrm{rem}}\setminus\{f^\star_{i^\star}\}$
\EndWhile
\State $\hat A_i\gets A_i\setminus V_i$ for every $i$ \Comment{safety filter, Section~\ref{sec:safety}}
\State \Return $\hat A$
\end{algorithmic}
\end{algorithm}

\begin{proposition}[Greedy allocation]
\label{prop:greedy}
Consider Algorithm~\ref{alg:greedy} on the feasible allocations of
Section~\ref{sec:problem}, before filtering and without a stock limit.
\begin{props}
\item \emph{Nash.} $\Phi_\epsilon$ is monotone and submodular, and greedy
Nash allocation attains at least $1/3$ of its
maximum~\cite{fisher1978analysis}.
\item \emph{Utilitarian.} Greedy utilitarian allocation attains at least
$1/2$ of the optimum~\cite{korte1978analysis}. The optimum is a
maximum-weight bipartite $b$-matching, computable in polynomial time.
\item \emph{Egalitarian.} Approximating the egalitarian optimum within any
factor below $2$ is NP-hard~\cite{bezakova2005allocating}. We claim no
guarantee for the greedy egalitarian rule.
\end{props}
\end{proposition}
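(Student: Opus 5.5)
Each part reduces to a known result once the allocation is cast as a set function on the ground set $\Omega=N\times\cF$. The feasible allocations are the common independent sets of the two partition matroids of Section~\ref{sec:problem}.

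\textbf{Part (a).} I would first check the properties of $\Phi_\epsilon(X)=\sum_i h_i\big(\sum_{(i,f)\in X}U_i(f)\big)$ with $h_i(t)=\log(1+t/\epsilon)$. Each $h_i$ is nondecreasing and concave with $h_i(0)=0$. The argument is a modular function with nonnegative weights $U_i(f)>0$. A concave nondecreasing function of a nonnegative modular function is monotone and submodular: the marginal gain of adding $(i,f)$ to $X\subseteq Y$ is $h_i(t_X+U_i(f))-h_i(t_X)\ge h_i(t_Y+U_i(f))-h_i(t_Y)$ because $t_X\le t_Y$ and $h_i$ is concave. Sums of such functions keep both properties, and $\Phi_\epsilon(\varnothing)=0$.

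Next I would show that Algorithm~\ref{alg:greedy} under NSW is exactly the marginal-gain greedy over the intersection of the two matroids. The pairs that can still be added feasibly are $\{(i,f): i\in I, f\in F_{\mathrm{rem}}\}$. The gain of $(i,f)$ is $\log\big(1+U_i(f)/(\epsilon+U_i(A_i))\big)$, which increases in $U_i(f)$. So the best pair for member $i$ uses $f_i^\star$, and comparing members by the ratio $U_i(f_i^\star)/(\epsilon+U_i(A_i))$ picks the globally best pair. Fisher, Nemhauser and Wolsey~\cite{fisher1978analysis} show that greedy on a monotone submodular function over the intersection of $p$ matroids attains $1/(p+1)$ of the optimum. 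With $p=2$ this gives $1/3$.

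The main obstacle is that the algorithm stops when $F_{\mathrm{rem}}$ is empty or every member is full, whereas the theorem assumes greedy runs until no feasible positive-gain element remains. These stopping rules agree, because every gain is strictly positive ($U_i>0$) and adding continues while any feasible pair exists. The tie-breaking by lowest index is harmless, since the bound holds for any maximizing choice. The phrase ``without a stock limit'' means each food is a single unit, and I would note this so the partition-matroid description is exact. Filtering is excluded by hypothesis.

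\textbf{Part (b).} $W_{\mathrm{UT}}$ is modular, so the same argument shows that the UT rule adds the feasible pair of largest weight. Greedy for a linear objective over the intersection of $p$ matroids gives $1/p$~\cite{korte1978analysis}, which is $1/2$ here. For the optimum, I would note that a common independent set is exactly a subgraph of the bipartite graph $N\times\cF$ with degree at most $K$ on member nodes and at most $1$ on food nodes. Maximum weight is therefore a maximum-weight bipartite $b$-matching. Its LP is totally unimodular, since the incidence matrix of a bipartite graph is TU, or equivalently it is a min-cost flow with source capacities $K$ and sink capacities $1$. Either view gives a polynomial-time algorithm. Positive weights mean that maximum weight over matchings of any size is the right target.

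\textbf{Part (c).} This part cites a known hardness result and makes no claim of its own. I would observe that the Santa Claus instances of Bez\'akov\'a and Dani~\cite{bezakova2005allocating} embed into our model by taking $K\ge m$, which makes the cap vacuous, and $\epsilon$-free utilities rescaled into $(0,1)$. A max-min ratio is invariant to common scaling, so the factor-$2$ hardness transfers. The only care needed is strict positivity of utilities: their reduction uses $0/1$-type values, which I would perturb by a tiny $\delta$ small enough to preserve the gap.
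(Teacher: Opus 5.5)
Your proposal is correct and follows the paper's proof essentially step for step. The shared steps are the two partition matroids, submodularity of a concave nondecreasing function of a nonnegative modular function, and identification of the NSW and UT rules with marginal-gain greedy; they continue with the Fisher--Nemhauser--Wolsey $1/(p+1)$ and Korte--Hausmann $1/p$ bounds at $p=2$, the $b$-matching (min-cost flow) optimum, and taking $K\geq m$ to inherit the hardness in~(c). Your added checks on stopping, tie-breaking, and perturbing zero values into $(0,1)$ are sound refinements the paper leaves implicit; one small misreading is that ``without a stock limit'' means dropping the ingredient-stock knapsack constraint of the experiments (food exclusivity is part of the base model regardless), but removing that knapsack is exactly what makes the two-matroid description exact, so your argument is unaffected.
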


Part~(a) bounds the smoothed objective, not the Nash product. Neither bound
carries over to the filtered output in general, since removing unsafe foods
can lower utilities unevenly; in our experiments, however, the filter
removes nothing (Section~\ref{sec:safety}). The experiments also enforce a
pantry stock (Section~\ref{sec:experiments}), a knapsack constraint outside
the matroid model, so Appendix~\ref{app:results} compares greedy allocation
with exact optima instead.

The EF1 guarantee of the exact Nash
optimum~\cite{caragiannis2019unreasonable} does not transfer either: it
holds for complete allocations without constraints, whereas under our cap
the Nash optimum is PO within the feasible set but not known to be EF1.
Round robin over safe foods, stopped after $K$ rounds, is EF1 under
safe-part valuations and is the natural benchmark for an EF1 audit
(Appendix~\ref{app:results}).

\textbf{A bargaining view of the three rules.} Household allocation is an
$n$-person bargaining problem $(F,d)$~\cite[Ch.~26]{narahari2014game}: $F$ is
the set of utility vectors $\big(U_1(A_1),\ldots,U_n(A_n)\big)$ of feasible
allocations, and the disagreement point $d=\mathbf{0}$ is what every member
receives if no allocation is agreed, an empty bundle. The utilitarian and
egalitarian rules correspond to the utilitarian and egalitarian solutions of
bargaining theory~\cite[Sec.~26.6]{narahari2014game}, and the Nash rule to the
Nash bargaining solution of $(F,\mathbf{0})$~\cite{nash1950bargaining}, the
only one of the three characterized by Nash's axioms. The smoothed
objective is the Nash product with disagreement point $-\epsilon\mathbf{1}$,
since $\Phi_\epsilon(A)=\sum_i\log\big(U_i(A_i)+\epsilon\big)-n\log\epsilon$.
Nash's axioms assume a convex feasible set. Convexity arises here
when the household rotates allocations over repeated meals: a rotation that
uses allocation $A$ on a fraction of the meals yields expected utilities in
the convex hull $\mathrm{conv}(F)$, which is the rationale for convexity in
bargaining theory~\cite[Sec.~26.2]{narahari2014game}.

\begin{proposition}[Bargaining properties]
\label{prop:bargain}
Let $F$ be the set of utility vectors of feasible allocations.
\begin{props}
\item \emph{Scale covariance.} For constants $a_i>0$, replacing each $U_i$ by
$a_iU_i$ leaves the Nash-optimal allocations unchanged. The utilitarian and
egalitarian rules lack this property, both exactly and greedily.
\item \emph{Synthesis.} Suppose $F$ contains a vector with positive entries,
and let $x^\star$ maximize $\prod_ix_i$ over $\mathrm{conv}(F)$. With
$\lambda_i=1/x^\star_i$,
\[
x^\star\in\arg\max_{x\in\mathrm{conv}(F)}\ \sum_i\lambda_ix_i
\qquad\text{and}\qquad
\lambda_1x^\star_1=\cdots=\lambda_nx^\star_n.
\]
Conversely, any positive $x^\star\in\mathrm{conv}(F)$ with both properties
for some $\lambda>0$ is the Nash solution.
\item \emph{Greedy step.} Let $\lambda_i=1/(\epsilon+U_i(A_i))$. The Nash
rule of Algorithm~\ref{alg:greedy} selects the member maximizing
$\lambda_iU_i(f_i^\star)$. As $\epsilon\to0$, this choice is invariant to
rescaling among members that already hold an item.
\end{props}
\end{proposition}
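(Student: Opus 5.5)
I treat the three parts separately, since each rests on a different elementary fact.

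\emph{Part (a).} The Nash product of a rescaled allocation factors as $\prod_i a_iU_i(A_i)=\big(\prod_i a_i\big)\prod_iU_i(A_i)$. The constant $\prod_i a_i>0$ preserves the order of the finite set of feasible allocations, so the Nash-optimal set is unchanged. This holds even when some member gets zero: the zero products stay zero and remain below every positive product.

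For the negative claims, I plan explicit counterexamples with $n=2$, $K=1$ and two foods, on which exact and greedy allocation coincide.
\begin{itemize}
\item \emph{Utilitarian.} Take $U_1=(0.6,0.5)$ and $U_2=(0.5,0.1)$. The rule gives $f_1$ to member~1, for a sum of $0.6+0.1$ rather than $0.5+0.5$. After rescaling member~1 by $a_1=0.1$, it gives $f_1$ to member~2 instead, both exactly and greedily.
\item \emph{Egalitarian.} Take one food that both members want and a second food only member~1 values. Rescaling member~1 upward changes who is worst-off, so I can arrange for greedy EG to pick a different member first. For the exact rule, I choose utilities so that the max-min allocation switches when $a_1$ changes.
\end{itemize}
I will verify the small arithmetic but not belabor it.

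\emph{Part (b).} Let $G=\mathrm{conv}(F)$, which is compact and convex; by hypothesis it contains a positive point. Maximize $\sum_i\log x_i$ over the positive part of $G$. The maximizer $x^\star$ exists because the objective tends to $-\infty$ at the boundary, and it is unique by strict concavity. The first-order optimality condition for a concave objective over a convex set gives, for all $x\in G$,
\[
\sum_i (x_i-x^\star_i)/x^\star_i\le 0 .
\]
This says exactly that $x^\star$ maximizes $\sum_i\lambda_ix_i$ with $\lambda_i=1/x^\star_i$, and then $\lambda_ix^\star_i=1$ for every $i$.

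For the converse, suppose a positive $x^\star$ maximizes $\sum_i\lambda_ix_i$ over $G$ and $\lambda_ix^\star_i=c$ for all $i$. Then $\lambda_i=c/x^\star_i$, so $\sum_i x_i/x^\star_i\le n$ on $G$. By AM--GM,
\[
\Big(\prod_i x_i/x^\star_i\Big)^{1/n}\le \frac1n\sum_i x_i/x^\star_i\le 1 ,
\]
so $x^\star$ maximizes the product. Here the only delicate point is to justify the variational inequality at a boundary face of $G$. I would handle it directly: take the directional derivative along $x^\star+t(x-x^\star)$ and use convexity of $G$.

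\emph{Part (c).} The first claim is immediate from the NSW line of Algorithm~\ref{alg:greedy}. For invariance, fix the set $M$ of members with $U_i(A_i)>0$. For $i\in M$, as $\epsilon\to0$ the score $\lambda_iU_i(f_i^\star)$ tends to $U_i(f_i^\star)/U_i(A_i)$. Since $f_i^\star$ is an argmax of $U_i$, it is unchanged by scaling, and the ratio is homogeneous of degree zero, so rescaling leaves it fixed.

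Members with empty bundles have scores of order $1/\epsilon\to\infty$, so they are chosen before anyone in $M$. Among themselves, they are compared by $U_i(f_i^\star)$, which is where scale invariance genuinely fails; this is why the statement restricts to members already holding an item.

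The main subtlety is ties in the limit. I would state invariance for strict comparisons among members of $M$, and with the fixed lowest-index tie-break the choice is then identical for all sufficiently small $\epsilon$.
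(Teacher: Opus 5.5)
Parts (b) and (c), and the Nash half of (a), are sound and follow the paper's proof. Your AM--GM step in the converse of (b) is an equivalent substitute for the paper's tangent-line bound on $\log$. It also covers points with a zero coordinate without a separate case.

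The gap is in the counterexamples for the negative half of (a).

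\textbf{Utilitarian rule.} In your instance, $U_1=(0.6,0.5)$ and $U_2=(0.5,0.1)$, the exact optimum is not the one you state. Giving $f_2$ to member~1 and $f_1$ to member~2 yields $0.5+0.5=1.0>0.7$. After scaling member~1 by $a_1=0.1$, that assignment still wins, since $0.05+0.5=0.55>0.06+0.1=0.16$. So only the greedy rule switches; the exact rule does not. Your premise that exact and greedy allocation coincide on $n=2$, $K=1$, two-food instances is also false, since this very instance separates them ($0.7$ against $1.0$). You need an instance in which the exact comparison flips under the rescaling. The paper's $U_1=(0.1,0.3)$, $U_2=(0.2,0.5)$ with $a_1=2$ works: both the exact and the greedy rule move member~1 from $f_1$ to $f_2$. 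In your instance, $a_1=5$ would flip the exact rule but not the greedy one.

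\textbf{Egalitarian rule.} This case is only sketched, and the mechanism you describe cannot work for the greedy rule with $K=1$. The greedy egalitarian rule of Algorithm~\ref{alg:greedy} selects $\arg\min_{i\in I}U_i(A_i)$. At the start every bundle is empty, so all members tie at $0$ whatever the scaling, and the lowest-index tie-break gives the first pick to member~1. With $K=1$, member~1 then leaves $I$ and the remaining members again tie at $0$. Greedy max-min with $K=1$ is therefore serial dictatorship in index order, each member taking its favorite remaining food. That depends only on rankings and is invariant to every rescaling.

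Scale can matter only once some member picks a second time and accumulated utilities are compared. That requires $K\ge 2$ and, for $n=2$, at least three foods. The paper's instance is $K=2$, $U_1=(0.5,0.4,0.3)$, $U_2=(0.5,0.45,0.02)$. There the third food goes to member~2 ($0.45<0.5$), but after halving member~1 it goes to member~1 ($0.25<0.45$). The exact optimum also moves member~1 from $\{f_2,f_3\}$ to $\{f_1,f_3\}$.

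For the exact egalitarian rule alone, $K=1$ does suffice. Your own utilitarian instance with $a_1=0.1$ already works there: the max-min comparison between the two complete assignments moves from $0.5$ against $0.1$ to $0.05$ against $0.06$.
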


Part~(a) matters because utilities are model estimates whose scale may
differ across members, for instance when a longer history yields more
confident scores. The Nash optimum does not depend on how utilities compare
across members~\cite{kaneko1979nash}, whereas there are instances in which
the utilitarian rule favors a member whose utilities are scaled up and the
egalitarian rule one whose utilities are scaled down
(Appendix~\ref{app:proofs}).

Part~(b) extends Proposition~26.1 of~\cite{narahari2014game} from two
members to $n$: the Nash rule is a utilitarian rule that weights each member
by the inverse of its utility, and an egalitarian rule in those weighted
units, which is why it lands between the other two rules in
Table~\ref{tab:allocation}. Part~(c) shows that greedy Nash allocation
applies this reweighting one pick at a time, with weights
$\lambda_i=1/(x_i+\epsilon)$ for disagreement point $-\epsilon\mathbf{1}$;
rescaling can affect only the order in which members with empty bundles
make their first pick. It is scale-free once every
member holds an item, and fixing the order of first picks removes the
remaining dependence as $\epsilon\to0$. Section~\ref{sec:results_allocation}
tests~(a) and~(c).

\subsection{Symbolic Safety Filter}
\label{sec:safety}
The penalty in Eq.~\ref{eq:utility} shapes the ranking but, by
Proposition~\ref{prop:scp}, cannot guarantee safety. After allocation, a
deterministic verifier therefore removes from each bundle every food that
breaks a rule of its member, regardless of score:
\begin{equation}
\hat{A}_i=A_i\setminus V_i=A_i\cap S_i.
\label{eq:filter}
\end{equation}
In the allocation experiments, the rules limit sodium to 500\,mg per
serving for members with hypertension and carbohydrates to 50\,g per
serving for members with diabetes; the coverage evaluation of
Section~\ref{sec:results_rag} adds chronic kidney disease (CKD) and
cardiovascular disease (CVD). The filter guarantees compliance with these
encoded per-serving rules; it does not establish clinical safety, which
would also depend on rule completeness, nutrient data, portion sizes, and
whole-day intake.

We measure compliance with the constraint satisfaction rate of an
allocation $B$,
\begin{equation}
\CSR(B)=\frac{1}{n}\sum_{i=1}^{n}\frac{|B_i\setminus V_i|}{|B_i|}.
\label{eq:csr}
\end{equation}
Before filtering, $\CSR(A)$ measures what the learned components achieve
alone; after filtering, $\CSR(\hat A)=1$ whenever the filter has the full
rule set. In the ablation of Section~\ref{sec:results_ablation}, rules are
removed from the filter, and CSR, still measured against the full rule set,
can fall below one.

\textbf{Filter placement.} The filter runs after allocation so that it stays
independent of the allocation rule and can audit any upstream policy. The
cost is that removed foods are not replaced, so a verified bundle can hold
fewer than $K$ items, and the loss falls mostly on members with the most
restrictive rules. The alternative is to allocate within the safe sets
$S_i$, which never does worse at the optimum:

\begin{proposition}[Safe-set allocation at the optimum]
\label{prop:placement}
For any welfare objective $W$, let $A^{\mathrm{all}}$ maximize $W$ over all
feasible allocations, and let $A^{\mathrm{safe}}$ maximize $W$ over feasible
allocations with $A_i\subseteq S_i$ for every $i$. Then
\[
W(A^{\mathrm{safe}})\geq W(\hat{A}^{\mathrm{all}}),
\]
where $\hat{A}^{\mathrm{all}}$ is $A^{\mathrm{all}}$ after filtering.
\end{proposition}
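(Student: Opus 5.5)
The plan is a feasibility argument: the filtered optimum $\hat{A}^{\mathrm{all}}$ is itself a candidate in the restricted problem that $A^{\mathrm{safe}}$ solves. Then $W(A^{\mathrm{safe}})\geq W(\hat{A}^{\mathrm{all}})$ follows from the definition of $A^{\mathrm{safe}}$ as a maximizer.

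First, I will check that $\hat{A}^{\mathrm{all}}$ is feasible in the sense of Section~\ref{sec:problem}. By Eq.~\ref{eq:filter}, $\hat{A}^{\mathrm{all}}_i=A^{\mathrm{all}}_i\cap S_i\subseteq A^{\mathrm{all}}_i$. Disjointness of the bundles is inherited from $A^{\mathrm{all}}$, since subsets of pairwise disjoint sets are pairwise disjoint. The cap is inherited as well, since $|\hat{A}^{\mathrm{all}}_i|\leq|A^{\mathrm{all}}_i|\leq K$. In the matroid encoding of Section~\ref{sec:problem}, this is just the statement that subsets of a common independent set of the two partition matroids remain independent.

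Second, I will check the safe-set constraint. By construction $\hat{A}^{\mathrm{all}}_i\subseteq S_i$ for every $i$, so $\hat{A}^{\mathrm{all}}$ lies in the feasible set over which $A^{\mathrm{safe}}$ maximizes $W$. Optimality of $A^{\mathrm{safe}}$ then gives the inequality. This holds for any objective $W$ whatsoever, including $W_{\mathrm{UT}}$, $W_{\mathrm{EG}}$, $W_{\mathrm{NSW}}$ and $\Phi_\epsilon$. No monotonicity or additivity is needed, because we never compare $W(\hat A^{\mathrm{all}})$ with $W(A^{\mathrm{all}})$. I will also remark that the restricted feasible set is nonempty, since it contains the empty allocation, so $A^{\mathrm{safe}}$ exists. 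Finiteness of $\cF$ ensures the maxima are attained.

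The only point needing care is the definition of feasibility, which I expect to be the main obstacle only in a bookkeeping sense. If the experiments' pantry stock constraint is included, the argument still goes through, provided that constraint is downward closed. A knapsack constraint on total servings is downward closed, since removing items only lowers usage. I would state this hypothesis explicitly, namely that the feasible family is closed under taking subsets of bundles. That closure is exactly what the proof uses. I would also note that the inequality can be strict, and that no reverse comparison with $W(A^{\mathrm{all}})$ is claimed.
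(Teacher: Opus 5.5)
Your proposal is correct and matches the paper's proof: filtering only removes foods, so $\hat{A}^{\mathrm{all}}$ stays disjoint, stays within the cap, and lies in the safe sets. It is therefore feasible for the problem that $A^{\mathrm{safe}}$ solves, and optimality of $A^{\mathrm{safe}}$ gives the inequality. Your extra remarks on existence of the maximizers and on downward closure under a stock constraint are sound refinements the paper leaves implicit, while the paper adds instead that the argument needs exact maximizers and does not hold instance by instance for greedy allocation.
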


In practice the filter has nothing to remove: across the 219,539
recommendations of the held-out allocation runs, the soft penalty already
keeps every recommendation within the rules, so $\CSR(A)=\CSR(\hat A)=1$.
The shared-meal extension of Section~\ref{sec:shared_meals} allocates within
the safe sets.

\subsection{Post-Hoc Explainability Module}
\label{sec:explainability-method}
The verified recommendations $\hat{A}_i$ feed a post-hoc explainability
module, which computes attributions only after the utility, allocation, and
safety decisions are final. FairFRESH explains why a food reaches a member by
treating the pipeline as a cooperative game whose players $\cK$ are the five
components that shape a member's utility and allocation: Taste, Pantry, RAG,
Popularity, and Fairness. Safety is not a player: the filter is a hard
constraint that stays active in every coalition, since crediting it with
utility would treat safety as something that can be traded off.

The value function is defined for each member--food pair that the full
pipeline recommends. For a coalition $T\subseteq\cK$, we recompute the
utilities and rerun the allocation with only the components in $T$ active;
with $U^T$ and $A^T$ the resulting utilities and allocation, the value of
the pair $(i,f)$ is $v_{if}(T)=U^T_i(f)$ if $f\in A^T_i$ and $0$ otherwise.

Removing Taste sets the user embedding to zero, removing Pantry sets
$p_{ij}$ to zero, removing RAG replaces the retrieved evidence $R_i$ of
Eq.~\ref{eq:retrieval} by padding, removing Popularity sets $\Pop(f_j)$ to
zero, and removing Fairness replaces exclusive Nash allocation by
independent top-$K$ lists. The dietary rules stay active in every
coalition, and unsafe foods are excluded before allocation. Thus
$v(T\cup\{k\})-v(T)$ is the utility that component $k$ adds once the
components in $T$ are present, and attributions are averaged over pairs.
Averaging this marginal over all orderings gives the Shapley attribution of
component $k$,
\begin{equation}
\phi_k=\sum_{T\subseteq\cK\setminus\{k\}}\frac{|T|!\,(|\cK|-|T|-1)!}{|\cK|!}\big[v(T\cup\{k\})-v(T)\big],
\label{eq:shapley}
\end{equation}
and averaging it uniformly over coalitions gives the Banzhaf index,
\begin{equation}
\psi_k=\frac{1}{2^{|\cK|-1}}\sum_{T\subseteq\cK\setminus\{k\}}\big[v(T\cup\{k\})-v(T)\big].
\label{eq:banzhaf}
\end{equation}
Both indices are linear in $v$~\cite[Ch.~29]{narahari2014game}, so the
mean attributions over households that Section~\ref{sec:explainability}
reports are the attributions of the household-averaged game. The Shapley
value is efficient, $\sum_{k}\phi_k=v(\cK)-v(\varnothing)$; the Banzhaf index
is not.

The two indices also weight coalitions differently. The Shapley value gives
each coalition size $|T|\in\{0,\ldots,4\}$ total weight $1/5$, and the
Banzhaf index gives size $s$ weight $\binom{4}{s}/16$, so the coalitions with
none or all of the other components count for $1/5$ under Shapley but
$1/16$ under Banzhaf. Since every size receives Banzhaf weight at least
$1/16$, a component whose mean marginal contribution is non-negative at every
coalition size satisfies $\psi_k\geq\frac{5}{16}\phi_k$.

We pass these attributions, together with the retrieved evidence, the pantry
overlap, and allocation metadata, to a lightweight language model whose only
task is to render them in natural language. The generated text is checked
against the same evidence and rejected if it introduces a condition,
ingredient, nutrient, or line of reasoning that was not used. The module can
therefore explain the pipeline's decisions but cannot change ranking,
allocation, or safety.

\section{Experimental Setup}
\label{sec:experiments}

We evaluate FairFRESH on households of two to six members built from a
national health and diet survey. We first measure the individual accuracy
of the utility model and then compare allocation strategies that share this
model, so that differences between strategies come from the allocation
layer alone.

\subsection{Data}
User profiles come from the National Health and Nutrition Examination
Survey (NHANES), pooling public releases from cycle C through the March 2023
release (2003--2023). For each respondent, we join the demographic,
medical-condition, body-measurement, blood-pressure, and two-day
dietary-recall components by sequence number; BMI, diabetes status, and
hypertension are read directly from these records. Each recall lists the
USDA food codes the respondent consumed; together they form the 8,704-item
catalog $\cF$. The first recall day is a respondent's history and the second
the test interactions. The utility model is trained on 1,000 respondents,
with each user's first-day foods as input and second-day foods as positives.
The allocation experiments use 1,000 other respondents, drawn from the
60,814 eligible respondents not used in training, and the individual
comparison of Section~\ref{sec:results_baselines} uses 400 further test
users.

Because NHANES records individuals rather than households, we form
households of $n=2$--$6$ by sampling respondents without replacement within
each size (100 households per size); households of different sizes may
share respondents. The utility model is trained once and held fixed across
all allocation strategies.

\textbf{Availability and stock.} NHANES records no household inventory, so
we use two proxies. In the utility model, a food is available to a member if
the member reported it on the first recall day (Eq.~\ref{eq:pantry}). In
allocation, each household receives a stock of 2 to 7 units of each of the
100 most frequent words in food descriptions, which stand in for
ingredients; a food requires one unit of each of its description words, and
strategies B--E allocate a food only if the remaining stock can supply it,
deducting its requirements.

\subsection{Baselines}
For individual accuracy, we compare
FairFRESH's utility model with BPRMF~\cite{Rendle2009},
NeuMF~\cite{He2017NeuMF}, LightGCN~\cite{He2020LightGCN},
GRU4Rec~\cite{Hidasi2016GRU4Rec}, and SASRec~\cite{Kang2018SASRec}, and
with two heuristics: Popularity, and Repeat-D1, which ranks a user's
first-day foods by popularity ahead of all other foods. The ID-based models
are trained with the test users' first-day interactions, so every test user
has a trained embedding. For household
allocation, five strategies share the same utility model: (A) independent
top-$K$ lists per member, which may give the same food to several members;
(B) conflict resolution, which merges the members' top-$K$ lists in order of
utility and skips any food already held by another member; and the
(C) utilitarian, (D) egalitarian (max-min), and (E) Nash rules of
Section~\ref{sec:allocation}. Strategies B--E allocate each food to at most
one member, as Section~\ref{sec:problem} requires; strategy A does not and
serves only as an uncoordinated reference.

\subsection{Evaluation Protocol and Metrics}
We report accuracy against held-out interactions
(Recall@20 and NDCG@20); the pantry utilization rate (PUR), the share of a
member's recommendations drawn from the household's first-day foods,
normalized by $\min(|P|,K)$;
equity over members' utilities, namely the Gini coefficient, minimum
utility, and Nash social welfare (NSW, the geometric mean of member
utilities); and dietary coverage, namely retrieval success, evidence
relevance, and CSR (Eq.~\ref{eq:csr}). Each member receives $K=20$ items,
matching the metric cutoff, and the Nash rule uses $\epsilon=10^{-5}$.

Minimum utility and NSW are the objectives of the egalitarian and Nash
rules, so we compare rules mainly on the other metrics. Equity is computed
on predicted utilities, which assumes they are comparable across members;
Section~\ref{sec:results_allocation} tests how much each rule depends on
this assumption. We report 95\% bootstrap confidence intervals (2,000
resamples over households, or over users for Table~\ref{tab:baselines}) and
compare strategies with paired $t$-tests over households, reporting the
paired effect size $d_z$. Training and baseline settings are in
Appendix~\ref{app:results}.

Tables~\ref{tab:allocation} and~\ref{tab:ablation} use the same 100
four-person households, so the full model in the ablation is Nash
allocation in Table~\ref{tab:allocation}. The size sweep draws its own
households; at $n=4$, its Nash results (Recall@20 0.218, Gini 0.106) differ
from Table~\ref{tab:allocation} (0.197 and 0.112) within sampling variation,
with overlapping confidence intervals. The attribution analysis uses 500
households of two to six members (2,000 members) from the same held-out
pool.

\section{Results}
\label{sec:results}

\subsection{RQ-A: Does Availability Provide Useful Signal?}
\label{sec:results_pantry}
On the 400 test users, zeroing the availability input of the trained
utility model, a state it sees for every food a member did not eat the day
before, cuts Recall@20 from 0.293 to 0.083 and NDCG@20 from 0.311 to 0.091.
Replacing each member's first-day foods with a random set of the same size
gives 0.045 and 0.027. With the real input, 96.6\% of the top 20
recommendations are first-day foods, against 8.5\% when the input is
zeroed. Most of the model's accuracy therefore comes from predicting that
people eat again the foods they ate the day before.

\subsection{Comparison with Recommender Baselines}
\label{sec:results_baselines}
Table~\ref{tab:baselines} reports individual-level accuracy. Repeat-D1 is
the most accurate method (Recall@20 0.321), which confirms how much of the
signal lies in previous-day consumption. The utility model is second
(0.293) and above every learned baseline; the strongest of these are
LightGCN (0.244) and BPRMF (0.220), and Popularity alone reaches 0.235.
Individual ranking is therefore not where FairFRESH's contribution lies. The
allocation layer needs a utility for every member--food pair that combines
availability with taste, popularity, and the dietary penalty, and the
utility model supplies these at an accuracy close to the repeat heuristic.

To test what the learned utilities add, we allocate the held-out households
of Table~\ref{tab:allocation} with Repeat-D1 scores as utilities. Nash
allocation then reaches a slightly higher Recall@20 (0.210 against 0.197)
but leaves dietary compliance entirely to the filter: 2.7\% of allocated
foods break a member's rule, and after filtering, members with diabetes or
hypertension hold 17.8 items instead of 20 ($d_z=0.87$, $p<10^{-13}$, paired
over households). With learned utilities, the dietary penalty keeps every
allocated food within the rules before filtering, at a small cost in
accuracy.

\begin{table}[htbp]
\caption{Individual-level accuracy on 400 test users (mean over five seeds
for trained baselines; 95\% user-bootstrap confidence intervals in
brackets).}
\label{tab:baselines}
\centering
\small
\begin{tabular}{lcc}
\toprule
\textbf{Method} & \textbf{Recall@20} & \textbf{NDCG@20} \\
\midrule
Popularity & 0.235 [0.221, 0.248] & 0.253 [0.241, 0.265] \\
Repeat-D1 & \textbf{0.321} [0.304, 0.339] & \textbf{0.376} [0.358, 0.393] \\
\midrule
BPRMF & 0.220 [0.206, 0.234] & 0.234 [0.220, 0.249] \\
NeuMF & 0.111 [0.105, 0.117] & 0.141 [0.133, 0.149] \\
LightGCN & 0.244 [0.231, 0.258] & 0.262 [0.250, 0.274] \\
GRU4Rec & 0.207 [0.195, 0.220] & 0.234 [0.220, 0.247] \\
SASRec & 0.012 [0.004, 0.022] & 0.005 [0.002, 0.008] \\
\midrule
FairFRESH utility model & 0.293 [0.276, 0.311] & 0.311 [0.293, 0.331] \\
\quad without availability input & 0.083 [0.074, 0.093] & 0.091 [0.080, 0.102] \\
\bottomrule
\end{tabular}
\end{table}

\subsection{RQ-B: Does Fairness Improve Household Recommendation?}
\label{sec:results_allocation}
Table~\ref{tab:allocation} compares the five strategies on 100 held-out
four-person households. Conflict resolution (B) is the least equitable
coordinated strategy on every equity metric. A member displaced from a
contested food moves down their own list, and because the lists hold only
each member's top 20 foods, displaced members can end with fewer than 20
items (Table~\ref{tab:scaling}, Appendix~\ref{app:results}). Among the three
allocation rules, the egalitarian rule is the most equal, the utilitarian
rule the least equal, and Nash allocation sits between them with the
highest Recall@20 of any coordinated strategy.

\begin{table}[htbp]
\caption{Household allocation on 100 held-out four-person households, with
95\% household-bootstrap confidence intervals. Higher is better except for
Gini. Bold marks the best coordinated strategy (B--E) in each column; A is
an uncoordinated reference that may give the same food to several
members.}
\label{tab:allocation}
\centering
\small
\setlength{\tabcolsep}{4pt}
\begin{tabular}{lcccc}
\toprule
\textbf{Strategy} & \textbf{Recall@20} & \textbf{Min Util.} & \textbf{Gini} & \textbf{NSW} \\
\midrule
A) Independent & 0.266 [0.250, 0.282] & 9.79 [9.36, 10.21] & 0.109 [0.100, 0.118] & 12.76 [12.42, 13.09] \\
B) Conflict resolution & 0.180 [0.165, 0.194] & 5.35 [5.00, 5.70] & 0.192 [0.180, 0.205] & 8.54 [8.25, 8.81] \\
C) Utilitarian & 0.194 [0.180, 0.209] & 7.64 [7.31, 7.96] & 0.146 [0.136, 0.155] & 10.72 [10.45, 10.99] \\
D) Max-Min & 0.188 [0.173, 0.202] & \textbf{8.75} [8.41, 9.08] & \textbf{0.088} [0.079, 0.096] & 10.56 [10.27, 10.87] \\
E) Nash & \textbf{0.197} [0.182, 0.211] & 8.56 [8.23, 8.89] & 0.112 [0.103, 0.121] & \textbf{10.98} [10.70, 11.27] \\
\bottomrule
\end{tabular}
\end{table}

Strategy A is more accurate (Recall@20 0.266) only because it gives
contested foods to several members at once, so its accuracy is a ceiling
rather than a feasible baseline. Paired over the 100 households, Nash
allocation lowers Gini relative to conflict resolution by 42\%, from 0.192
to 0.112 ($d_z=-1.94$, $p<10^{-34}$), raises NSW from 8.54 to 10.98
($d_z=3.57$, $p<10^{-57}$), and raises Recall@20 from 0.180 to 0.197
($d_z=0.44$, $p=3\times10^{-5}$). Strategies C, D, and E are EF1 in every
household, and conflict resolution in 92--99\% depending on household size.
Round robin over safe foods, the EF1 benchmark of
Section~\ref{sec:allocation}, is a strong simple alternative: on a separate
set of four-person households it matches Nash allocation on Recall@20
(0.216 against 0.218) with lower Gini (0.097 against 0.106) and lower NSW
(10.47 against 10.63; Appendix~\ref{app:results}).

\textbf{Robustness to utility scale.} Proposition~\ref{prop:bargain}(a)
predicts that the utilitarian and egalitarian rules respond to how one
member's utilities are scaled, while the Nash rule does not. We multiply
the utilities of one randomly chosen member of each household by
$a\in\{0.5,2\}$ and rerun allocation. Doubling a member's utilities changes
35\% of utilitarian assignments and raises that member's true utility by
16\%; halving them changes 41\% of egalitarian assignments and raises it by
13\%. Greedy Nash allocation changes 1.5\% of assignments at $a=0.5$ and
0.2\% at $a=2$, and the rescaled member gains nothing (changes of $-0.010$
and $+0.002$ in utility). With the order of first picks fixed
(Proposition~\ref{prop:bargain}(c)), no assignment changes. Round robin uses
only each member's ranking and is invariant to rescaling by construction.

\subsection{Scaling with Household Size}
\label{sec:results_scaling}

Fig.~\ref{fig:scaling} plots Gini and minimum utility against household
size for $n=2$--$6$; Table~\ref{tab:scaling} in Appendix~\ref{app:results}
reports all metrics.

\begin{figure}[t]
\centering
\includegraphics[width=0.85\textwidth]{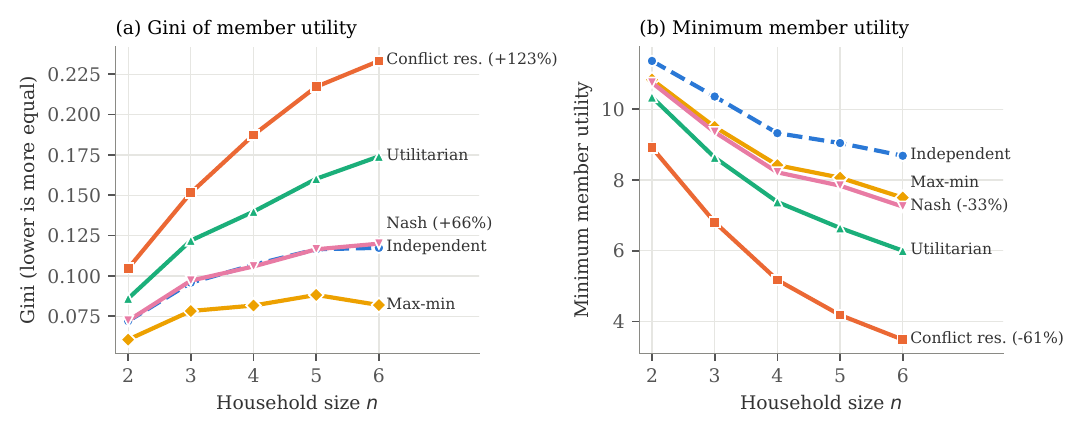}
\caption{Scaling with household size, plotted from Table~\ref{tab:scaling}.
(a) Gini of per-member utility (lower is more equal). (b) Minimum member
utility (higher is better). Percentages mark the changes from $n=2$ to
$n=6$ discussed in the text. Independent recommendation (dashed) is an
uncoordinated reference (Section~\ref{sec:results_allocation}).}
\label{fig:scaling}
\end{figure}

Conflict resolution degrades steadily as households grow
(Fig.~\ref{fig:scaling}): its minimum utility falls by 61\% from $n=2$ to
$n=6$ and its Gini more than doubles. Part of the loss is that displaced
members run out of candidates: the merged lists hold only each member's top
20 foods, so members receive 15.1 items on average at $n=2$ and 9.7 at
$n=6$. Nash allocation degrades more gently, its minimum utility falling by
33\%, so its lead over conflict resolution on that measure widens from 1.84
at $n=2$ to 3.76 at $n=6$. The egalitarian rule shows the smallest Gini
increase but ends with the lowest Recall@20 of any coordinated strategy
(0.160 at $n=6$). Nash allocation attains the highest NSW at every size, and
its Recall@20 is within 0.002 of the most accurate coordinated strategy
throughout. CSR remains 1.00 for every strategy and household size. The
advantage of fair allocation therefore grows with household size.

\subsection{RQ-C: Retrieval-Augmented Constraint Coverage}
\label{sec:results_rag}
Retrieval for the constraint encoder succeeds for all four conditions
tested. Before filtering, the retrieval-augmented encoder alone attains
constraint satisfaction of 100\% for diabetes and hypertension, 96\% for CKD,
and 98\% for cardiovascular disease (CVD), with evidence-relevance scores of
0.95, 0.94, 0.88, and 0.91. These figures measure retrieval and evidence
quality, not clinical validity.

\subsection{Component Ablation}
\label{sec:results_ablation}
Table~\ref{tab:ablation} compares the full model with four single-component
ablations under the protocol of Table~\ref{tab:allocation}; removing the
fairness optimizer means allocating by the utilitarian rule.

\begin{table}[htbp]
\caption{Component ablation on the held-out four-person households of
Table~\ref{tab:allocation}, under Nash allocation unless stated. The full
model's Recall@20 (0.197) lies below its individual-level value in
Table~\ref{tab:baselines} (0.293) because each food goes to one member. CSR
is measured against the full rule set.}
\label{tab:ablation}
\centering
\begin{tabular}{lcccc}
\toprule
\textbf{Model} & \textbf{Recall@20} & \textbf{CSR} & \textbf{Gini} & \textbf{NSW} \\
\midrule
Full FairFRESH & 0.197 & 1.000 & 0.112 & 10.98 \\
w/o Pantry (availability) Encoder & 0.064 & 1.000 & 0.160 & 5.58 \\
w/o Medical Constraint & 0.194 & 0.856 & 0.124 & 9.23 \\
w/o Fairness Optimizer (utilitarian) & 0.194 & 1.000 & 0.146 & 10.72 \\
w/o RAG Constraint Encoder & 0.200 & 1.000 & 0.087 & 10.06 \\
\bottomrule
\end{tabular}
\end{table}

Removing the availability encoder cuts Recall@20 from 0.197 to 0.064 and NSW
from 10.98 to 5.58, the largest loss of any component. Without the
medical-constraint input, which removes the dietary rules from both the
utility model and the filter, Recall@20 barely moves but 14.4\% of
recommendations break a member's dietary limit. Utilitarian allocation in
place of the fairness optimizer lowers Recall@20 slightly (0.194) and raises
Gini from 0.112 to 0.146. (In the attribution game of
Section~\ref{sec:explainability-method}, removing Fairness instead means
independent lists, which isolates its effect on utility.) Removing the
retrieval-augmented encoder leaves CSR at 1.000, since the encoder supplies
no rules to the filter; Recall@20 changes little (0.200), Gini falls to
0.087, and NSW falls to 10.06. With only diabetes and hypertension present,
its role is the coverage of Section~\ref{sec:results_rag} and the evidence
behind explanations.

\subsection{RQ-D: Are Post-Hoc Explanations Faithful?}
\label{sec:explainability}
Fig.~\ref{fig:explainability} reports attributions on 500 held-out
households of two to six members (2,000 members, 40,000 member--food
pairs). We measure faithfulness as the correlation between a component's
attribution and the utility lost when that component is removed, the
single-component case of the faithfulness correlation of Bhatt
et al.~\cite{bhatt2020evaluating}. Shapley attributions track these losses
closely ($r=0.763$) and Banzhaf attributions less closely ($r=0.597$;
Fig.~\ref{fig:explainability}a). The test removes components with the same
operator as the value function $v$, so it shows that the attributions agree
with the pipeline's own removal behavior, not causal faithfulness in a
stronger sense.

\begin{figure}[t]
\centering
\includegraphics[width=0.85\textwidth]{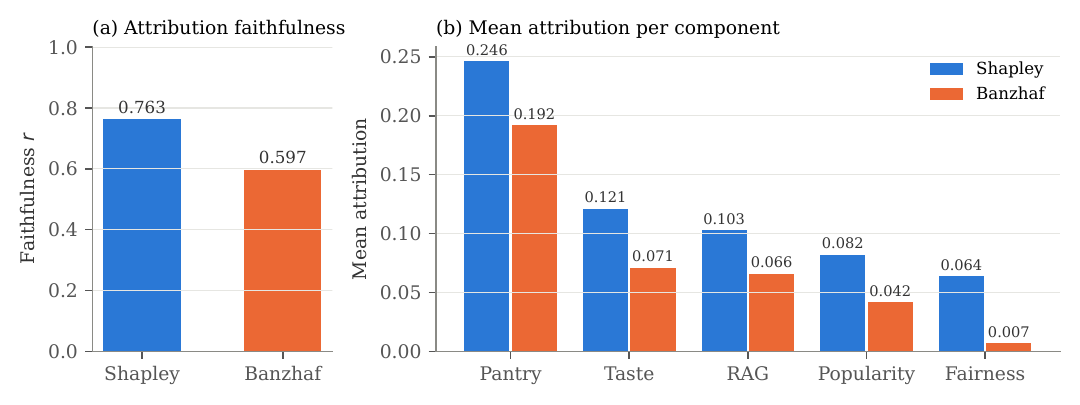}
\caption{Game-theoretic explainability on 500 held-out households of two to six members.
(a) Attribution faithfulness: correlation $r$ between each component's
attribution and the utility drop observed when that component is removed.
(b) Mean Shapley and Banzhaf attribution per component, sorted by Shapley
value. Safety is not a player, since the filter is a hard constraint
rather than a source of utility (Section~\ref{sec:explainability-method}).}
\label{fig:explainability}
\end{figure}

By efficiency, the mean Shapley values sum to
$v(\cK)-v(\varnothing)=0.616$; the Banzhaf indices sum to 0.378. Pantry
receives the most credit under both indices (Shapley 0.246, Banzhaf 0.192),
40\% of the Shapley total and about twice the credit of taste (0.121 and
0.071), in line with RQ-A and the ablation. RAG is third (0.103 and 0.066),
followed by Popularity (0.082 and 0.042) and Fairness (0.064 and 0.007).

Fairness redistributes utility rather than creating it, so its effect shows
in Gini and minimum utility (Tables~\ref{tab:allocation}
and~\ref{tab:scaling}) rather than in total utility. It is also the only
component that violates $\psi_k\geq\frac{5}{16}\phi_k$
(Section~\ref{sec:explainability-method}): its Shapley value of 0.064 would
require a Banzhaf index of at least 0.020 if its mean marginal contribution
were non-negative at every coalition size, against the observed 0.007.
Fairness therefore lowers total utility in coalitions of some intermediate
size and earns its Shapley credit in the smallest and largest coalitions,
which the Shapley value weights more heavily.

Four automated checks of 1,000 generated explanations against the
retrieved evidence give 98.7\% text faithfulness, 100.0\% evidence-citation
accuracy, 99.8\% completeness, and 99.1\% attribution consistency.

\section{Extension: Shared Household Meals}
\label{sec:shared_meals}

The formulation of Section~\ref{sec:problem} gives each member a disjoint
bundle, which models meals assembled from individual servings. Household
dining, however, often centers on dishes cooked once and eaten by everyone,
such as a common pot of rice or dal. A shared dish is a public good, while
the remaining slots stay private goods (Section~\ref{sec:related}), so we
choose shared dishes by a max-min criterion over members rather than by
envy, and evaluate the extension with simulated and with learned
utilities.

\paragraph{Setting.} As in the main experiments, the pantry is a stock
vector $q\geq0$ over ingredients and, for each dish $f$, a requirement vector
$r(f)$, which is one serving for an individual item and one household batch
for a shared dish. A dish is in stock if $r(f)\leq q$ componentwise, and
allocating it sets $q\gets q-r(f)$. We use two settings, each with 100
four-person households under five seeds, $K=6$, and $\epsilon=10^{-5}$:
3,597 Indian recipes from RecipeNLG~\cite{bien2020recipenlg} with simulated
utilities and randomly assigned exclusion rules, and the NHANES foods of the
held-out evaluation pool with the trained model's utilities and dietary rules.
Appendix~\ref{app:results} gives the details.

\paragraph{Two-stage allocation.} Algorithm~\ref{alg:shared_meal}
(Appendix~\ref{app:results}) first
chooses one grain $g^\star$ and one dal $d^\star$ that are cooked once and
added to every member's bundle, so their ingredients are charged to the
pantry a single time. Because every member eats a shared dish, it must lie
in every member's safe set, $g^\star,d^\star\in\bigcap_i S_i$, and the pair is
chosen to maximize the utility of the worst-off member. The remaining $K-2$
slots of each bundle are then filled by the Nash rule of
Algorithm~\ref{alg:greedy}, starting from the utility each member already
receives from the shared dishes and skipping any item that is unsafe for its
member or that the remaining pantry cannot supply.

If no safe grain--dal
pair is in stock, Stage~1 is skipped and the procedure reduces to Nash
allocation within the safe sets. As in Section~\ref{sec:problem}, each dish
is allocated to at most one member per meal, even when the stock could
supply a second serving; the stock limits which dishes can be prepared,
and the exclusivity rule keeps bundles distinct.

Stage~1 is exact: it enumerates all safe pairs in
$O(n|\mathcal{G}||\mathcal{L}|)$ time. Stage~2 is the Nash step of
Algorithm~\ref{alg:greedy} with $U_i(A_i)$ initialized to the utility of the
shared dishes, and it ends after at most $nK$ picks.
Proposition~\ref{prop:greedy} does not carry over: shared dishes break the
disjointness of Section~\ref{sec:problem}, and the stock constraint
$r(f)\leq q$ is a knapsack constraint rather than a matroid, so we claim no
approximation guarantee for Stage~2. Both stages choose only from safe
sets, so no post-hoc filter is needed, as Proposition~\ref{prop:placement}
recommends.

\paragraph{Rotating shared meals.} Stage~1 is a matrix
game~\cite[Ch.~9]{narahari2014game} whose rows are the safe pairs
$\pi=(g_\pi,d_\pi)\in\Pi$, whose columns are the members, and whose payoff is
$M_{\pi i}=U_i(g_\pi)+U_i(d_\pi)$: a planner chooses a pair and an adversary
chooses the member on whom it is judged. Stage~1 returns the planner's
maxmin value in pure strategies~\cite[Sec.~6.6]{narahari2014game}. A
household that plans shared meals for $T$ days need not cook one pair every
day, and in practice rarely does. A rotation that cooks pair $\pi$ on a
fraction $p_\pi$ of the days is a mixed strategy of the planner, under which
the worst-off member's average utility from the shared dishes is
$\min_i\sum_\pi p_\pi M_{\pi i}$.

\begin{proposition}[Rotating shared meals]
\label{prop:rotation}
Consider the rotation problem without stock limits. Let $v_{\mathrm{pure}}$
be the best worst-off utility from a single pair and $v^\star$ the best from
a rotation:
\[
v_{\mathrm{pure}}=\max_{\pi\in\Pi}\,\min_i M_{\pi i},
\qquad
v^\star=\max_{p\in\Delta(\Pi)}\,\min_i\sum_{\pi}p_\pi M_{\pi i}.
\]
\begin{props}
\item \emph{Minimax.} $v^\star\geq v_{\mathrm{pure}}$, and
\[
v^\star=\min_{w\in\Delta(N)}\,\max_{\pi\in\Pi}\,\sum_iw_iM_{\pi i}.
\]
An optimal rotation $p^\star$ and member weighting $w^\star$ solve a linear
program and its dual.
\item \emph{Support.} Some optimal rotation uses at most $n$ distinct pairs.
\item \emph{Binding members.} If $w^\star_i>0$, member $i$ receives exactly
$v^\star$ under $p^\star$. If $p^\star_\pi>0$, pair $\pi$ maximizes
$\sum_iw^\star_iM_{\pi i}$.
\end{props}
\end{proposition}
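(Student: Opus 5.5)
The plan is to treat Stage~1 as the finite two-player zero-sum game with payoff matrix $M\in\mathbb{R}^{|\Pi|\times n}$, where the planner picks rows and the adversary picks members, and to read everything off von Neumann's minimax theorem~\cite[Ch.~9]{narahari2014game} together with LP duality. The inequality $v^\star\geq v_{\mathrm{pure}}$ is immediate. Each pure pair $\pi$ corresponds to the vertex $p=e_\pi$ of $\Delta(\Pi)$, so the maximum over the simplex is at least the maximum over its vertices. For the minimax equality I will invoke the minimax theorem for finite matrix games, which gives $\max_p\min_w p^\top Mw=\min_w\max_p p^\top Mw$. I then note two facts. First, for fixed $p$, the inner minimum over $w\in\Delta(N)$ of a linear function is attained at a vertex, so $\min_w p^\top Mw=\min_i\sum_\pi p_\pi M_{\pi i}$. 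Second, for fixed $w$, $\max_p p^\top Mw=\max_\pi\sum_i w_iM_{\pi i}$. Together these give the displayed formula for $v^\star$.

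For the LP claim I will write the primal: maximize $t$ subject to $\sum_\pi p_\pi M_{\pi i}\geq t$ for all $i$, $\sum_\pi p_\pi=1$, and $p\geq0$. Its dual, with multipliers $w_i\geq0$ on the member constraints and a free variable $s$ on the normalization, is: minimize $s$ subject to $\sum_i w_iM_{\pi i}\leq s$ for all $\pi$ and $\sum_i w_i=1$. The constraint $\sum_i w_i=1$ comes from the free variable $t$. Both programs are feasible and bounded, since $M$ is finite and $\Pi$ is nonempty; if $\Pi=\varnothing$ the statement is vacuous. Strong duality therefore gives equal optima, which also reproves the minimax identity. This is the one step where I will check signs carefully when deriving the dual.

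For part~(c) I will use complementary slackness on this primal--dual pair. If $w^\star_i>0$, the primal constraint for member $i$ is tight, so $\sum_\pi p^\star_\pi M_{\pi i}=t^\star=v^\star$. If $p^\star_\pi>0$, the dual constraint for $\pi$ is tight, so $\sum_i w^\star_iM_{\pi i}=s^\star=v^\star$, which is the maximum over $\pi$ because of the dual constraints.

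For part~(b), the support bound, I will take a basic optimal solution of the primal LP in the variables $(p,t)$. Removing $t$, or simply counting, the constraints other than nonnegativity number $n+1$: the $n$ member constraints plus the normalization. A basic feasible solution has at most $n+1$ nonzero variables among $(p,t)$, and $t$ may be one of them. This gives at most $n+1$ pairs, which is one too many, and closing that gap is the main obstacle. I will tighten it with a Carathéodory argument. The point $y=M^\top p^\star$ lies in the convex hull of the rows $\{M_\pi\}$, and it is dominated by the optimality condition. Consider the face argument: $y$ can be replaced by a point $y'\geq v^\star\mathbf{1}$ that lies on the boundary of the hull, since otherwise $v^\star$ could be increased by moving along $\mathbf{1}$ inside the hull. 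Concretely, take the largest $\theta\geq0$ such that $y+\theta\mathbf{1}$ remains in $\mathrm{conv}\{M_\pi\}$; optimality forces $\theta=0$ or else contradicts maximality. A boundary point of a polytope in $\mathbb{R}^n$ lies on a face of dimension at most $n-1$, so Carathéodory applied within that face expresses it with at most $n$ rows. If the hull is lower-dimensional, Carathéodory in its affine hull already gives at most $n$ points. The resulting $p$ keeps every member at or above $v^\star$ and is therefore optimal with at most $n$ distinct pairs.
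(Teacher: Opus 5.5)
Your proofs of (a) and (c) match the paper's: vertices of $\Delta(\Pi)$ for $v^\star\geq v_{\mathrm{pure}}$, the minimax theorem with both inner optimizations reduced to vertices, the same primal--dual pair, and complementary slackness. You diverge in (b).

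The paper stays inside the LP and closes the gap you flagged in one line. Utilities are non-negative, so either $v^\star=0$ and every single pair is already optimal, or $v^\star>0$. In the second case, imposing $t\geq0$ changes nothing, and the standard-form program has $n+1$ equality rows after slacks are added. In an optimal basic solution, $t=v^\star$ is then necessarily one of the at most $n+1$ non-zero variables, which leaves at most $n$ for $p$. The same conclusion holds in your formulation with $t$ free. A vertex needs $|\Pi|+1$ linearly independent tight constraints, and at most $n+1$ of them are not of the form $p_\pi\geq0$, so at least $|\Pi|-n$ coordinates of $p$ vanish. Your bound of $n+1$ pairs was therefore loose, not a real obstacle.

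Your Carath\'eodory argument is nevertheless correct. Optimality rules out any $\theta>0$ with $y+\theta\mathbf{1}\in\mathrm{conv}\{M_\pi\}$, where $y=M^\top p^\star$. So $y$ lies on a proper face of a full-dimensional hull, or the hull is already lower-dimensional. Carath\'eodory in that face re-expresses $y$ with at most $n$ rows, and the new rotation yields the same payoff vector.

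The trade-off is as follows. Your route is more general and more geometric: it needs no sign condition on $M$ and no case split on $v^\star$, and it shows the optimal payoff vector sits on a facet of the payoff polytope. The paper's basic-solution count instead yields the remark that follows the proposition. When stock constraints are added as extra rows, each binding one adds at most one non-zero basic variable, so the support is at most $n$ plus the number of binding stock constraints. The face argument does not give that as directly.
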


Part~(a) follows from the minimax theorem~\cite[Thm.~9.2]{narahari2014game}:
rotation can only help the worst-off member, and the best rotation is
computed exactly. By~(b), a four-person household needs at most four shared
meals. By~(c), $w^\star$ names the members whose preferences bind the plan,
which the explanation module can report. Every pair in $\Pi$ is safe for
every member, so every rotation is safe.

Without stock limits we round
$p^\star$ to a $T$-day plan by largest remainders. Stock limits over the plan,
$T\sum_\pi p_\pi\big(r(g_\pi)+r(d_\pi)\big)\leq q$, are linear, so the problem
remains a linear program, and each binding stock constraint adds at most one
pair to the bound in~(b). With stock limits, the linear program still gives
the best fractional rotation, but rounding can violate the stock. We
therefore compute the $T$-day plan exactly as an integer program and
compare it with the best pair that can be cooked on all $T$ days.

\paragraph{Results.} Table~\ref{tab:shared_meals} compares three strategies
in each setting over 500 households. Per-item utilities are min-max
normalized to $[0,1]$, so a bundle's utility lies between 0 and 6.
\emph{Unstructured} allocation applies the single-serving procedure of
Section~\ref{sec:allocation} with no meal structure. \emph{Structured
individual} allocation gives each member, in index order, the safe and
in-stock grain--dal pair with the highest utility for that member, cooked
for that member alone, and fills the remaining slots by the Nash rule.
\emph{Structured shared} allocation is Algorithm~\ref{alg:shared_meal}.
Full meal is the share of members whose bundle contains both a grain and a
legume, and pantry used is the share of the pantry stock consumed.

\begin{table}[htbp]
\caption{Shared versus individual meals on four-person households ($K=6$;
100 households under each of five seeds). Values are means over seeds;
standard deviations across seeds are at most 0.05 for utilities, 0.003 for
Gini, 2.3 points for full meal, and 0.12 points for pantry used. Per-item
utilities are min-max normalized. Higher is better except for Gini and
pantry used. Bold marks the best value in each column within a setting.}
\label{tab:shared_meals}
\centering
\begin{tabular}{lccccc}
\toprule
\textbf{Strategy} & \textbf{Mean Util.} & \textbf{Min Util.} & \textbf{Gini} & \textbf{Full Meal} & \textbf{Pantry Used} \\
\midrule
\multicolumn{6}{l}{\emph{Indian recipes, simulated utilities}} \\
Unstructured & \textbf{5.34} & \textbf{4.92} & 0.029 & 28.6\% & 4.59\% \\
Structured individual & 5.26 & 4.86 & 0.030 & \textbf{100\%} & 5.12\% \\
Structured shared & 5.06 & 4.74 & \textbf{0.024} & \textbf{100\%} & \textbf{3.47\%} \\
\midrule
\multicolumn{6}{l}{\emph{NHANES foods, learned utilities}} \\
Unstructured & \textbf{5.58} & \textbf{5.10} & \textbf{0.029} & 0.6\% & 10.74\% \\
Structured individual & 4.45 & 3.97 & 0.045 & \textbf{100\%} & 11.63\% \\
Structured shared & 4.32 & 4.01 & 0.030 & \textbf{100\%} & \textbf{8.02\%} \\
\bottomrule
\end{tabular}
\end{table}

Unstructured allocation attains the highest mean and minimum utility but
rarely forms a complete meal: 28.6\% of members receive both a grain and a
legume with Indian recipes, and 0.6\% with NHANES foods, where grains and
legumes are a small part of the catalog. Individual servings complete every
meal but raise pantry use, since each member's grain and dal are cooked
separately. Shared meals also complete every meal and use the least stock:
32\% less than individual meals with Indian recipes and 31\% less with
NHANES foods. They also lower Gini, from 0.030 to 0.024 and from 0.045 to
0.030, while mean utility falls by 4\% and 3\%, since a shared dish must
suit every member. All three differences are significant in both settings
($p<10^{-15}$, paired over 500 households; $|d_z|$ between 0.56 and 4.06).
Without the all-member safety constraint of Stage~1, 11.8\% of members are
served a shared dish that violates one of their rules with simulated
utilities, against 0.0\% with learned utilities, whose soft penalty already
scores unsafe dishes low.

Appendix~\ref{app:results} (Table~\ref{tab:worked_example}) shows both
allocations for one held-out NHANES household. The shared white bread and
pinto beans are safe for every member, including one with hypertension,
every member gets a complete meal, and pantry use falls from 14.4\% to
10.7\%, at a cost of 0.6 to 1.6 in each member's utility.

\paragraph{Rotation.} Over a seven-day plan in the Indian-recipe setting
(68,523 to 464,610 safe pairs per household), the rounded optimal rotation
raises the worst-off member's average utility from shared dishes from 1.329
to 1.365, improves on the fixed pair in 88.6\% of households, and uses 2.6
distinct pairs on average; $w^\star$ is positive for 3.2 of the 4 members.

Depletion matters more: cooking the Stage~1 pair every day exceeds the stock
in 99.4\% of households. Under the stock constraint, we compute the
seven-day plan exactly as an integer program over the linear program's
support and the 200 pairs with the highest minimum utility; every instance
solves to optimality. The integer plan gives the worst-off member 1.334
against 0.978 for the best pair that can be cooked all week, a gain of 38\%,
improves on that pair in every household, and lies within 0.005 of the
linear-programming bound. It uses 5.0 distinct pairs on average, more than
$n=4$, as the stock constraints allow. With NHANES foods, only 150 of 500
households have any pair that can be cooked all week; in those, the
integer rotation improves on it by 43\%, and every household has a feasible
rotation. Under a finite pantry, rotation is usually the only feasible
weekly plan.

\section{Discussion}
\label{sec:discussion}
Because the utility model is fixed while the allocation rule varies, the
equity differences we report come from allocation alone, and the default
procedure, conflict resolution, is the least equitable rule we tested.

\textbf{Nash allocation and round robin.} Round robin performs almost as
well as Nash allocation in our experiments and is EF1 by construction. The
two rules use different information. Round robin uses only each member's
ranking, so it is scale-free but cannot tell a strong preference from a weak
one; the Nash rule uses utility intensities, its exact optimum is
Pareto-optimal and maximizes NSW, and greedy Nash allocation attains higher
NSW than round robin at the same accuracy (10.63 against 10.47). When
utility differences are meaningful, the Nash rule trades them off across
members and round robin cannot; when only rankings can be trusted, round
robin is the safer simple choice.

\textbf{Strategic reporting.} Stated preferences enter the utility model,
so a member could misreport them. Households do not use money, so VCG
transfers are unavailable~\cite[Ch.~18]{narahari2014game}, and by the
Gibbard--Satterthwaite theorem~\cite[Ch.~17]{narahari2014game} a
strategy-proof deterministic rule that can select every safe shared pair is
dictatorial when preferences over three or more pairs are unrestricted.
Pair utilities in Stage~1 are additive over a grain and a dal, which
restricts the preference domain, so this is a conditional impossibility
rather than a proof that Stage~1 can be manipulated. By
Proposition~\ref{prop:bargain}(a), rescaling one's reported utilities can
pay under the utilitarian rule (by exaggerating) and the egalitarian rule
(by understating), but not under the exact Nash rule; scale invariance does
not make the Nash rule strategy-proof against other misreports.

\textbf{Limitations.} Households are random groupings of NHANES
respondents, availability is measured by previous-day consumption, and the
allocation stock uses description words as ingredients, so households,
availability, and stock are all proxies. The utility model was trained
without a validation set, and the baselines were not tuned. In the
shared-meal extension, pantry quantities are approximated, the
Indian-recipe utilities are simulated, and Stage~2 has no approximation
guarantee. Equity is measured on predicted rather than realized utilities.
The greedy egalitarian rule has no guarantee. The main experiments filter after allocation; allocation within
safe sets, which Proposition~\ref{prop:placement} favors, is evaluated only
for shared meals. Dietary coverage is assessed through retrieval and
evidence relevance rather than clinical review, the allocation experiments
cover only diabetes and hypertension, and CKD and CVD coverage (96\% and
98\%) falls short of the 100\% reached for those two conditions. The
explanation metrics are automated checks, not human judgments.

\section{Conclusion and Future Work}
\label{sec:conclusion}
We introduced FairFRESH, a neuro-symbolic framework for household food
recommendation. Existing food recommenders score each member in isolation,
although a shared pantry turns independently high-scoring recommendations
into competition for the same foods. FairFRESH instead treats household
recommendation as constrained fair division. A learned utility model scores
every member--food pair, an explicit welfare rule allocates foods from the
resulting utility matrix, and a deterministic filter enforces dietary
safety, which is needed because no finite penalty can certify foods near a
rule threshold. Shapley and Banzhaf attributions then explain each
allocation without changing it.

On NHANES-derived data, previous-day consumption carried most of the
predictive signal: the utility model ranked above five learned
recommenders but below a heuristic that repeats previous-day foods. On
held-out households of two to six members with exclusive allocation,
conflict resolution, the natural default, was the least equitable
strategy, while Nash allocation removed most of that inequity with higher
accuracy, came within a few percent of exact optima, and was EF1 in every
household; round robin performed almost as well.

Bargaining theory explains this result: the Nash rule is utilitarian and
egalitarian at once under natural weights and does not depend on how
members' utilities are scaled. For dishes cooked once for the household,
sharing saved about a third of the pantry, and a rotation of shared dishes
was usually the only weekly plan a finite pantry could supply. These
results come from households built from individual survey respondents,
with predicted rather than observed utilities and dietary rules that have
not been clinically reviewed; within that setting, they show that the
allocation rule shapes the balance between accuracy and equity as much as
the ranking model does.

\paragraph{\textbf{Future work.}} We see several directions for extending
this work. First, we will replace the fixed pantry of the main evaluation
with an inventory that tracks depletion, and extend rotation from shared
dishes to whole allocations, which realizes the convexified bargaining
problem of Proposition~\ref{prop:bargain}(b). Second, we will move
allocation within the safe sets so that verified bundles always hold $K$
items. Third, we will compute the egalitarian rule with an assignment
algorithm that carries a max-min guarantee and compare Nash allocation with
round robin across more settings. Finally, we plan a
blinded human evaluation of the explanations, a clinical review of the
dietary rules, and validation of the cohorts against observed households.

\section*{Acknowledgments}
The authors thank Vedant Balasubramaniam for his valuable
suggestions and insights, and Jonnalagadda Kowshik and Gannem Sai Shankar Narayana
for their contributions during the early stages of this work. AI-based language
models, including GPT and Claude, were used for language refinement and source
summarization. All AI-assisted content was reviewed and
verified by the authors for accuracy and factual integrity.

\bibliographystyle{abbrvnat}
\bibliography{references}

@inproceedings{zhang2025mopi,
  author    = {Zhang, Zheyuan and Wang, Zehong and Ma, Tianyi and Taneja, Varun Sameer and Nelson, Sofia and Le, Nhi Ha Lan and Murugesan, Keerthiram and Ju, Mingxuan and Chawla, Nitesh V. and Zhang, Chuxu and Ye, Yanfang},
  title     = {{MOPI-HFRS}: A Multi-Objective Personalized Health-Aware Food Recommendation System with {LLM}-Enhanced Interpretation},
  booktitle = {Proc. ACM SIGKDD (KDD)},
  year      = {2025},
  note      = {arXiv:2412.08847}
}

@misc{freitas2025swiss,
  author       = {{Abdur Rahman}, Lubnaa and Papathanail, Ioannis and Mougiakakou, Stavroula},
  title        = {Introducing the {Swiss} Food Knowledge Graph: {AI} for Context-Aware Nutrition Recommendation},
  howpublished = {arXiv:2507.10156},
  year         = {2025}
}

@inproceedings{Caldeira2018,
  author    = {Caldeira, Jefferson and Oliveira, Ricardo S. and Marinho, Leandro B. and Trattner, Christoph},
  title     = {Healthy Menus Recommendation: Optimizing the Use of the Pantry},
  booktitle = {Proc. HealthRecSys at RecSys},
  series    = {CEUR Workshop Proceedings},
  volume    = {2216},
  year      = {2018}
}

@inproceedings{Freyne2010,
  author    = {Freyne, Jill and Berkovsky, Shlomo},
  title     = {Intelligent Food Planning: Personalized Recipe Recommendation},
  booktitle = {Proc. ACM IUI},
  pages     = {321--324},
  year      = {2010}
}

@inproceedings{Ge2015,
  author    = {Ge, Mouzhi and Ricci, Francesco and Massimo, David},
  title     = {Health-Aware Food Recommender System},
  booktitle = {Proc. ACM RecSys},
  pages     = {333--334},
  year      = {2015}
}

@inproceedings{Trattner2017WWW,
  author    = {Trattner, Christoph and Elsweiler, David},
  title     = {Investigating the Healthiness of Internet-Sourced Recipes: Implications for Meal Planning and Recommender Systems},
  booktitle = {Proc. WWW},
  pages     = {489--498},
  year      = {2017}
}

@inproceedings{Haussmann2019,
  author    = {Haussmann, Steven and Seneviratne, Oshani and Chen, Yu and Ne'eman, Yarden and Codella, James and Chen, Ching-Hua and McGuinness, Deborah L. and Zaki, Mohammed J.},
  title     = {{FoodKG}: A Semantics-Driven Knowledge Graph for Food Recommendation},
  booktitle = {Proc. ISWC},
  pages     = {146--162},
  year      = {2019}
}

@article{Ma2024,
  author  = {Ma, Wenming and Li, Mingqi and Dai, Jian and Ding, Jianguo and Chu, Zihao and Chen, Hao},
  title   = {Nutrition-Related Knowledge Graph Neural Network for Food Recommendation},
  journal = {Foods},
  volume  = {13},
  number  = {13},
  pages   = {2144},
  year    = {2024}
}

@article{Deng2026,
  author  = {Deng, Xiwen and Tu, Wenzhe},
  title   = {Personalized Nutrition-Aware Dietary Recommendation with Multimodal Large Language Models},
  journal = {IEEE Access},
  volume  = {14},
  pages   = {21791--21804},
  year    = {2026}
}

@inproceedings{Burke2017,
  author    = {Burke, Robin},
  title     = {Multisided Fairness for Recommendation},
  booktitle = {Workshop on Fairness, Accountability and Transparency in Machine Learning (FAT/ML)},
  year      = {2017}
}

@article{Masthoff2004,
  author  = {Masthoff, Judith},
  title   = {Group Modeling: Selecting a Sequence of Television Items to Suit a Group of Viewers},
  journal = {User Modeling and User-Adapted Interaction},
  volume  = {14},
  number  = {1},
  pages   = {37--85},
  year    = {2004}
}

@inproceedings{Serbos2017,
  author    = {Serbos, Dimitris and Qi, Shuyao and Mamoulis, Nikos and Pitoura, Evaggelia and Tsaparas, Panayiotis},
  title     = {Fairness in Package-to-Group Recommendations},
  booktitle = {Proc. WWW},
  pages     = {371--379},
  year      = {2017}
}

@article{caragiannis2019unreasonable,
  author  = {Caragiannis, Ioannis and Kurokawa, David and Moulin, Herv{\'e} and Procaccia, Ariel D. and Shah, Nisarg and Wang, Junxing},
  title   = {The Unreasonable Fairness of Maximum {Nash} Welfare},
  journal = {ACM Transactions on Economics and Computation},
  volume  = {7},
  number  = {3},
  pages   = {1--32},
  year    = {2019}
}

@article{nguyen2014computing,
  author  = {Nguyen, Trung Thanh and Nguyen, Nam Dang and Roos, Magnus and Rothe, J{\"o}rg},
  title   = {Computational Complexity and Approximability of Social Welfare Optimization in Multiagent Resource Allocation},
  journal = {Autonomous Agents and Multi-Agent Systems},
  volume  = {28},
  number  = {2},
  pages   = {256--289},
  year    = {2014}
}

@inproceedings{Carraro2022AIxIA,
  author    = {Carraro, Tommaso and Daniele, Alessandro and Aiolli, Fabio and Serafini, Luciano},
  title     = {Logic Tensor Networks for {Top-N} Recommendation},
  booktitle = {AIxIA 2022 -- Advances in Artificial Intelligence},
  pages     = {110--123},
  publisher = {Springer},
  year      = {2022}
}

@article{Badreddine2022AI,
  author  = {Badreddine, Samy and d'Avila Garcez, Artur and Serafini, Luciano and Spranger, Michael},
  title   = {Logic Tensor Networks},
  journal = {Artificial Intelligence},
  volume  = {303},
  pages   = {103649},
  year    = {2022}
}

@inproceedings{Lundberg2017,
  author    = {Lundberg, Scott M. and Lee, Su-In},
  title     = {A Unified Approach to Interpreting Model Predictions},
  booktitle = {Advances in Neural Information Processing Systems (NeurIPS)},
  pages     = {4765--4774},
  year      = {2017}
}

@article{banzhaf1965weighted,
  author  = {Banzhaf, III, John F.},
  title   = {Weighted Voting Doesn't Work: A Mathematical Analysis},
  journal = {Rutgers Law Review},
  volume  = {19},
  pages   = {317--343},
  year    = {1965}
}

@inproceedings{Rendle2009,
  author    = {Rendle, Steffen and Freudenthaler, Christoph and Gantner, Zeno and Schmidt-Thieme, Lars},
  title     = {{BPR}: Bayesian Personalized Ranking from Implicit Feedback},
  booktitle = {Proc. UAI},
  pages     = {452--461},
  year      = {2009}
}

@inproceedings{He2017NeuMF,
  author    = {He, Xiangnan and Liao, Lizi and Zhang, Hanwang and Nie, Liqiang and Hu, Xia and Chua, Tat-Seng},
  title     = {Neural Collaborative Filtering},
  booktitle = {Proc. WWW},
  pages     = {173--182},
  year      = {2017}
}

@inproceedings{He2020LightGCN,
  author    = {He, Xiangnan and Deng, Kuan and Wang, Xiang and Li, Yan and Zhang, Yongdong and Wang, Meng},
  title     = {{LightGCN}: Simplifying and Powering Graph Convolution Network for Recommendation},
  booktitle = {Proc. ACM SIGIR},
  pages     = {639--648},
  year      = {2020}
}

@inproceedings{Hidasi2016GRU4Rec,
  author    = {Hidasi, Bal{\'a}zs and Karatzoglou, Alexandros and Baltrunas, Linas and Tikk, Domonkos},
  title     = {Session-Based Recommendations with Recurrent Neural Networks},
  booktitle = {Proc. ICLR},
  year      = {2016}
}

@inproceedings{Kang2018SASRec,
  author    = {Kang, Wang-Cheng and McAuley, Julian},
  title     = {Self-Attentive Sequential Recommendation},
  booktitle = {Proc. IEEE ICDM},
  pages     = {197--206},
  year      = {2018}
}

@incollection{fisher1978analysis,
  author    = {Fisher, Marshall L. and Nemhauser, George L. and Wolsey, Laurence A.},
  title     = {An Analysis of Approximations for Maximizing Submodular Set Functions---{II}},
  booktitle = {Polyhedral Combinatorics},
  series    = {Mathematical Programming Studies},
  volume    = {8},
  pages     = {73--87},
  publisher = {Springer},
  year      = {1978}
}

@article{korte1978analysis,
  author  = {Korte, Bernhard and Hausmann, Dirk},
  title   = {An Analysis of the Greedy Heuristic for Independence Systems},
  journal = {Annals of Discrete Mathematics},
  volume  = {2},
  pages   = {65--74},
  year    = {1978}
}

@article{lee2017apx,
  author  = {Lee, Euiwoong},
  title   = {{APX}-Hardness of Maximizing {Nash} Social Welfare with Indivisible Items},
  journal = {Information Processing Letters},
  volume  = {122},
  pages   = {17--20},
  year    = {2017}
}

@article{bezakova2005allocating,
  author  = {Bez{\'a}kov{\'a}, Ivona and Dani, Varsha},
  title   = {Allocating Indivisible Goods},
  journal = {ACM SIGecom Exchanges},
  volume  = {5},
  number  = {3},
  pages   = {11--18},
  year    = {2005}
}

@article{kaneko1979nash,
  author  = {Kaneko, Mamoru and Nakamura, Kenjiro},
  title   = {The {Nash} Social Welfare Function},
  journal = {Econometrica},
  volume  = {47},
  number  = {2},
  pages   = {423--435},
  year    = {1979}
}

@inproceedings{biswas2018fair,
  author    = {Biswas, Arpita and Barman, Siddharth},
  title     = {Fair Division Under Cardinality Constraints},
  booktitle = {Proc. IJCAI},
  pages     = {91--97},
  year      = {2018}
}

@inproceedings{bhatt2020evaluating,
  author    = {Bhatt, Umang and Weller, Adrian and Moura, Jos{\'e} M. F.},
  title     = {Evaluating and Aggregating Feature-Based Model Explanations},
  booktitle = {Proc. IJCAI},
  pages     = {3016--3022},
  year      = {2020}
}

@article{stelmakh2021peerreview4all,
  author  = {Stelmakh, Ivan and Shah, Nihar B. and Singh, Aarti},
  title   = {{PeerReview4All}: Fair and Accurate Reviewer Assignment in Peer Review},
  journal = {Journal of Machine Learning Research},
  volume  = {22},
  number  = {163},
  pages   = {1--66},
  year    = {2021}
}

@misc{mishra2023fair,
  author       = {Shaily Mishra and Manisha Padala and Sujit Gujar},
  title        = {Fair Allocation of Goods and Chores: Tutorial and Survey of Recent Results},
  howpublished = {arXiv:2307.10985},
  year         = {2023}
}

@inproceedings{conitzer2017fair,
  author    = {Vincent Conitzer and Rupert Freeman and Nisarg Shah},
  title     = {Fair Public Decision Making},
  booktitle = {Proceedings of the 2017 ACM Conference on Economics and Computation (EC)},
  pages     = {629--646},
  year      = {2017}
}

@inproceedings{fain2018fair,
  author    = {Brandon Fain and Kamesh Munagala and Nisarg Shah},
  title     = {Fair Allocation of Indivisible Public Goods},
  booktitle = {Proceedings of the 2018 ACM Conference on Economics and Computation (EC)},
  pages     = {575--592},
  year      = {2018}
}

@article{suksompong2021constraints,
  author  = {Warut Suksompong},
  title   = {Constraints in Fair Division},
  journal = {ACM SIGecom Exchanges},
  volume  = {19},
  number  = {2},
  pages   = {46--61},
  year    = {2021}
}

@book{narahari2014game,
  author    = {Y. Narahari},
  title     = {Game Theory and Mechanism Design},
  series    = {IISc Lecture Notes Series},
  volume    = {4},
  publisher = {World Scientific and IISc Press},
  address   = {Singapore},
  year      = {2014}
}

@article{nash1950bargaining,
  author  = {John F. Nash},
  title   = {The Bargaining Problem},
  journal = {Econometrica},
  volume  = {18},
  number  = {2},
  pages   = {155--162},
  year    = {1950}
}

@inproceedings{bien2020recipenlg,
  author    = {Micha{\l} Bie{\'n} and Micha{\l} Gilski and Martyna Maciejewska and Wojciech Taisner and Dawid Wisniewski and Agnieszka Lawrynowicz},
  title     = {{RecipeNLG}: A Cooking Recipes Dataset for Semi-Structured Text Generation},
  booktitle = {Proceedings of the 13th International Conference on Natural Language Generation (INLG)},
  pages     = {22--28},
  year      = {2020}
}

\appendix

\section{Derivations and Proofs}
\label{app:proofs}

Throughout, $\sigma(z)=1/(1+e^{-z})$ is the logistic function. Each proof
treats the parts of its proposition in order. Proposition~\ref{prop:greedy},
Proposition~\ref{prop:bargain}(a), and Proposition~\ref{prop:rotation}(a)
specialize known results to our setting, and we cite the source in each
case; the contribution there lies in the formulation and in the conditions
under which each result applies.

\subsection{The Soft Constraint Penalty}
\label{app:scp}

\begin{lemma}
\label{lem:softplus}
For every $\kappa>0$, the function $\rho_\kappa$ of Eq.~\ref{eq:scp} is
convex, increasing, and $1$-Lipschitz, and for all $z$,
\[
\max(0,z)\;\leq\;\rho_\kappa(z)\;\leq\;\max(0,z)+\frac{\ln 2}{\kappa}.
\]
\end{lemma}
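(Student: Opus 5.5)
The plan is to work directly from the formula $\rho_\kappa(z)=\kappa^{-1}\log(1+e^{\kappa z})$ by differentiating it. The first derivative is
\[
\rho_\kappa'(z)=\frac{e^{\kappa z}}{1+e^{\kappa z}}=\sigma(\kappa z)\in(0,1).
\]
This one identity gives two properties at once. Since the derivative is positive, $\rho_\kappa$ is strictly increasing. Since it lies in $(0,1)$, the mean value theorem gives $|\rho_\kappa(z)-\rho_\kappa(z')|\leq|z-z'|$, so $\rho_\kappa$ is $1$-Lipschitz.

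Convexity comes from the second derivative,
\[
\rho_\kappa''(z)=\kappa\,\sigma(\kappa z)\bigl(1-\sigma(\kappa z)\bigr)>0.
\]
A shortcut is to note that $\rho_\kappa(z)=\kappa^{-1}\mathrm{LSE}(0,\kappa z)$, and log-sum-exp composed with an affine map is convex.

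For the two-sided bound, I will split into the cases $z\leq 0$ and $z>0$, using the identity $\log(1+e^{\kappa z})=\kappa z+\log(1+e^{-\kappa z})$.

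\emph{Case $z\leq 0$.} Here $\max(0,z)=0$. The lower bound $\rho_\kappa(z)\geq 0$ holds because $1+e^{\kappa z}>1$. The upper bound holds because $1+e^{\kappa z}\leq 2$, which gives $\rho_\kappa(z)\leq\kappa^{-1}\ln 2$.

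\emph{Case $z>0$.} The identity gives $\rho_\kappa(z)=z+\kappa^{-1}\log(1+e^{-\kappa z})$. The extra term lies in $[0,\kappa^{-1}\ln 2]$ because $0<e^{-\kappa z}\leq 1$. This yields $z\leq\rho_\kappa(z)\leq z+\kappa^{-1}\ln 2$.

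Combining the two cases gives the displayed inequality. The key step is to write the upper bound as a single symmetric statement, $\log(1+e^{-\kappa|z|})\leq\ln 2$, which holds for both signs of $z$. There is no real obstacle: every step is an elementary calculus check. The one point needing care is to state the Lipschitz property from the bound $|\rho_\kappa'|\leq 1$, and not to derive it from the sandwich inequality, which by itself would only give an additive error.
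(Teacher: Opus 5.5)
Your proof is correct and follows the paper's own argument. The paper likewise computes $\rho_\kappa'(z)=\sigma(\kappa z)\in(0,1)$ and $\rho_\kappa''(z)>0$ to get monotonicity, convexity and the Lipschitz bound, and obtains the sandwich from the single identity $\rho_\kappa(z)-\max(0,z)=\kappa^{-1}\log\bigl(1+e^{-\kappa|z|}\bigr)\in\bigl(0,\kappa^{-1}\ln 2\bigr]$, which your two-case split simply writes out separately for $z\leq 0$ and $z>0$.
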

\begin{proof}
The first two derivatives are
\[
\rho_\kappa'(z)=\sigma(\kappa z)\in(0,1),
\qquad
\rho_\kappa''(z)=\kappa\,\sigma(\kappa z)\big(1-\sigma(\kappa z)\big)>0,
\]
which give monotonicity, convexity, and the Lipschitz bound. For the
sandwich bound,
\[
\rho_\kappa(z)-\max(0,z)=\frac{1}{\kappa}\log\big(1+e^{-\kappa|z|}\big)\in\Big(0,\frac{\ln 2}{\kappa}\Big].
\qedhere
\]
\end{proof}

\begin{lemma}[Probabilistic reading of the penalty]
\label{lem:prob}
Suppose the true slack of rule $c$ is $g_c(f)+\xi_c$, where the $\xi_c$ are
independent logistic variables with location $0$ and scale $1/\kappa$, and
let $w_c=1$. Then
\[
-\kappa\,\SCP(u_i,f)=\sum_{c\in\cC_i}\log\sigma\big(-\kappa g_c(f)\big)=\log\Pr\big[f\text{ satisfies every rule in }\cC_i\big].
\]
Moreover, with $\lambda=\kappa$, the utility of Eq.~\ref{eq:utility}
satisfies
\[
U_i(f)\;\geq\;\sigma\big(\ell_i(f)\big)\cdot\Pr[f\text{ is safe for }i],
\]
with near equality when relevance probabilities are small.
\end{lemma}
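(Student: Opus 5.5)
The plan is to prove the two displayed claims separately. The first is an exact computation with the logistic distribution; the second is an elementary inequality for $\sigma$.

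For the first identity, fix a rule $c\in\cC_i$. Since $\xi_c$ is logistic with location $0$ and scale $1/\kappa$, its CDF is $\Pr[\xi_c\leq t]=\sigma(\kappa t)$. The true slack satisfies $g_c(f)+\xi_c\leq 0$ exactly when $\xi_c\leq -g_c(f)$, so rule $c$ holds with probability $\sigma(-\kappa g_c(f))$. Next I rewrite this probability:
\[
\log\sigma(-\kappa g)=-\log\big(1+e^{\kappa g}\big)=-\kappa\,\rho_\kappa(g),
\]
which is just the definition of $\rho_\kappa$ in Eq.~\ref{eq:scp}. Summing over $c\in\cC_i$ with $w_c=1$ gives $-\kappa\,\SCP(u_i,f)=\sum_c\log\sigma(-\kappa g_c(f))$. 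Because the $\xi_c$ are independent, the probability that every rule holds is the product of the per-rule probabilities, and its logarithm is this sum. That establishes both equalities.

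For the inequality, write $P=\Pr[f\text{ is safe for }i]\in(0,1]$ and $\ell=\ell_i(f)$. With $\lambda=\kappa$, the first part turns Eq.~\ref{eq:utility} into $U_i(f)=\sigma(\ell+\log P)$. Then
\[
\sigma(\ell+\log P)=\frac{Pe^{\ell}}{1+Pe^{\ell}}\;\geq\;\frac{Pe^{\ell}}{1+e^{\ell}}=\sigma(\ell)\,P,
\]
where the inequality holds because $P\leq 1$ makes the denominator smaller on the left.

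For the "near equality" clause, I would compute the exact ratio
\[
\frac{U_i(f)}{\sigma(\ell)P}=\frac{1+e^{\ell}}{1+Pe^{\ell}}\in\big[1,\,1+e^{\ell}\big].
\]
When the relevance probability $\sigma(\ell)$ is small, $e^{\ell}$ is small, and this ratio tends to $1$. I do not expect any real obstacle. The only point needing care is stating precisely what "near equality" means, and the ratio bound above makes it quantitative.
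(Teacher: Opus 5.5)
Your proof is correct and follows the paper's argument: the logistic CDF $\sigma(\kappa t)$, the identity $\log\sigma(-x)=-\log(1+e^{x})$ and independence give the first part, and the same ratio gives the second. The paper writes that ratio as $(e^{b}+e^{b-\ell})/(1+e^{b-\ell})$ with $b=\lambda\,\SCP$, which is your $(1+e^{\ell})/(1+Pe^{\ell})$ with $P=e^{-b}$. Your upper bound $1+e^{\ell}=1/(1-\sigma(\ell))$ is a small improvement: it makes the near-equality clause quantitative and uniform in $P$, whereas the paper only notes that the ratio tends to $1$ as $\ell\to-\infty$.
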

\begin{proof}
The logistic distribution with scale $1/\kappa$ has distribution function
$z\mapsto\sigma(\kappa z)$, so each rule holds with probability
$\Pr[\xi_c\leq-g_c(f)]=\sigma(-\kappa g_c(f))$. Since
$\log\sigma(-x)=-\log(1+e^{x})$,
\[
\log\sigma\big(-\kappa g_c(f)\big)=-\kappa\,\rho_\kappa\big(g_c(f)\big),
\]
and summing over the independent rules gives the identity.

For the bound, write $b=\lambda\,\SCP(u_i,f)\geq0$, so that
$U_i(f)=\sigma(\ell_i(f)-b)$ and $e^{-b}=\Pr[f\text{ is safe for }i]$. For
all $\ell\in\mathbb{R}$,
\[
\frac{\sigma(\ell-b)}{\sigma(\ell)\,e^{-b}}=\frac{e^{b}+e^{b-\ell}}{1+e^{b-\ell}}\;\geq\;1,
\]
because $e^{b}\geq1$, and the ratio tends to $1$ as $\ell\to-\infty$.
\end{proof}

If relevance and safety are treated as independent events, the right-hand
side $\sigma(\ell_i(f))\cdot\Pr[f\text{ is safe for }i]$ is the probability
that the food is both relevant and safe, which is the reading given in
Section~\ref{sec:utility}.

\subsection{Proof of Proposition~\ref{prop:scp}}
Since $\sigma$ is increasing, ranking by $U_i$ is ranking by the penalized
logit
\[
s_i(f)=\ell_i(f)-\lambda\,\SCP(u_i,f).
\]

\paragraph{Part (a).} Let $f^+$ be safe, and let $f^-$ violate some rule $c'$
by at least $\gamma$. Every slack of $f^+$ is non-positive, so
Lemma~\ref{lem:softplus} gives an upper bound on its penalty. Every term of
the penalty is positive, so the violated rule alone gives a lower bound on
the penalty of $f^-$:
\[
\SCP(u_i,f^+)\leq\frac{\ln 2}{\kappa}\sum_{c}w_c,
\qquad
\SCP(u_i,f^-)\geq w_{c'}\,\rho_\kappa(\gamma)\geq w_{\min}\,\rho_\kappa(\gamma).
\]
Since $\ell_i(f^+)-\ell_i(f^-)\geq-2L$,
\[
s_i(f^+)-s_i(f^-)\;\geq\;-2L+\lambda\Big(w_{\min}\,\rho_\kappa(\gamma)-\frac{\ln 2}{\kappa}\sum_{c}w_c\Big)\;>\;0
\]
under the condition of the proposition. As $\gamma\to0$,
$\rho_\kappa(\gamma)\to\kappa^{-1}\ln 2$, so the bracket tends to
\[
\frac{\ln 2}{\kappa}\Big(w_{\min}-\sum_c w_c\Big)\leq0,
\]
and no finite $\lambda$ satisfies the condition.

\paragraph{Part (b).} The two foods differ only in rule $c$, so by the
Lipschitz bound of Lemma~\ref{lem:softplus},
\[
\SCP(u_i,f^-)-\SCP(u_i,f^+)=w_c\big(\rho_\kappa(\eta)-\rho_\kappa(0)\big)\leq w_c\,\eta.
\]
Hence, if $\ell_i(f^-)-\ell_i(f^+)>\lambda w_c\eta$, then
$s_i(f^-)>s_i(f^+)$, and the unsafe $f^-$ ranks above $f^+$. \qed

\subsection{Proof of Proposition~\ref{prop:greedy}}
Let $\Omega=N\times\cF$, and define
\[
\mathcal{I}_1=\big\{X\subseteq\Omega:\ |X\cap(N\times\{f\})|\leq1\ \text{for all } f\big\},
\qquad
\mathcal{I}_2=\big\{X\subseteq\Omega:\ |X\cap(\{i\}\times\cF)|\leq K\ \text{for all } i\big\}.
\]
$\mathcal{I}_1$ is a partition matroid with one block per food,
$\mathcal{I}_2$ a partition matroid with one block per member, and the
feasible allocations are exactly $\mathcal{I}_1\cap\mathcal{I}_2$.

\paragraph{Part (a).} Write
\[
\Phi_\epsilon(X)=\sum_i h\Big(\sum_{f:(i,f)\in X}U_i(f)\Big),
\qquad h(t)=\log(1+t/\epsilon).
\]
The function $h$ is concave and increasing with $h(0)=0$. A concave,
non-decreasing function of a non-negative additive set function is
monotone and submodular, and so is a sum of such functions. Hence
$\Phi_\epsilon$ is normalized ($\Phi_\epsilon(\varnothing)=0$), monotone,
and submodular. The marginal gain of adding $(i,f)$ is
\[
\log\Big(1+\frac{U_i(f)}{\epsilon+U_i(A_i)}\Big)>0,
\]
so Algorithm~\ref{alg:greedy} with the Nash rule adds, at each step, the
feasible pair of largest gain, until no pair can be added. Fisher et
al.~\cite{fisher1978analysis} show that this greedy algorithm attains at
least $1/(p+1)$ of the optimum of a normalized monotone submodular function
over the intersection of $p$ matroids. Here $p=2$.

\paragraph{Part (b).} $W_{\mathrm{UT}}$ is additive with non-negative
weights, and the utilitarian rule adds the feasible pair of largest weight.
Over the intersection of $p$ matroids, this greedy algorithm attains at
least $1/p$ of the optimum~\cite{korte1978analysis}; here $p=2$. The exact
optimum assigns each food to at most one member and at most $K$ foods to
each member at maximum total weight. This is a maximum-weight bipartite
$b$-matching, which minimum-cost flow computes in polynomial time.

\paragraph{Part (c).} Setting $K\geq m$ recovers the unconstrained max-min
allocation problem, which is NP-hard to approximate within any factor below
two~\cite{bezakova2005allocating}. \qed

\subsection{Proof of Proposition~\ref{prop:bargain}}

\paragraph{Part (a).} For constants $a_i>0$,
\[
\prod_i a_iU_i(A_i)=\Big(\prod_i a_i\Big)\prod_i U_i(A_i),
\]
so the Nash product changes by a positive constant and its maximizers are
unchanged. For the other two rules, two-member examples suffice; in each,
the optimum is unique.
\begin{itemize}[leftmargin=1.5em,itemsep=2pt,topsep=3pt]
\item \emph{Utilitarian.} Let $K=1$, $U_1=(0.1,0.3)$, and $U_2=(0.2,0.5)$
over two foods. The exact and greedy rules both give member~1 the first
food, worth $0.1$. After member~1's utilities are doubled, both give it the
second food, worth $0.3$ to it.
\item \emph{Egalitarian.} Let $K=2$, $U_1=(0.5,0.4,0.3)$, and
$U_2=(0.5,0.45,0.02)$ over three foods. The exact optimum gives member~1
utility $0.7$, and the greedy rule gives it $0.5$. After member~1's
utilities are halved, both give it the first and third foods, worth $0.8$.
\end{itemize}

\paragraph{Part (b).} Let $G(x)=\sum_i\log x_i$. The set $\mathrm{conv}(F)$
is compact and convex and contains a point with positive entries, and $G$
is strictly concave, so $G$ has a unique maximizer $x^\star>0$ over
$\mathrm{conv}(F)$, which also maximizes the Nash product.

\emph{Necessity.} For $y\in\mathrm{conv}(F)$ and $\tau\in(0,1]$, the point
$x^\star+\tau(y-x^\star)$ lies in $\mathrm{conv}(F)$. Optimality of
$x^\star$ gives a non-positive directional derivative,
\[
\sum_i\frac{y_i-x^\star_i}{x^\star_i}\leq0.
\]
With $\lambda_i=1/x^\star_i$, this reads
$\sum_i\lambda_iy_i\leq\sum_i\lambda_ix^\star_i$, and $\lambda_ix^\star_i=1$
for every $i$.

\emph{Sufficiency.} Suppose $x^\star>0$ maximizes $\sum_i\lambda_ix_i$ over
$\mathrm{conv}(F)$ and $\lambda_ix^\star_i=c$ for all $i$. Then
$\lambda_i=c/x^\star_i$, so $\sum_iy_i/x^\star_i\leq n$ for every
$y\in\mathrm{conv}(F)$. For $y>0$, concavity of $G$ gives
\[
G(y)\;\leq\;G(x^\star)+\sum_i\frac{y_i}{x^\star_i}-n\;\leq\;G(x^\star),
\]
and $G(y)=-\infty$ if some $y_i=0$. Hence $x^\star$ is the Nash solution.
For $n=2$, this is Proposition~26.1 of~\cite{narahari2014game}.

\paragraph{Part (c).} With $\lambda_i=1/(\epsilon+U_i(A_i))$, the gain of
giving $f$ to member $i$ is $\log\big(1+\lambda_iU_i(f)\big)$. It increases
in $\lambda_iU_i(f)$, and each member's best remaining food $f_i^\star$ does
not depend on $\lambda_i$, so the Nash rule selects
$\arg\max_i\lambda_iU_i(f_i^\star)$. These are the weights of~(b) at the
point $x_i=U_i(A_i)$ for disagreement point $-\epsilon$.

Rescaling member $i$'s utilities by $a_i>0$ leaves $f_i^\star$ unchanged and
turns its score into
\[
\frac{a_iU_i(f_i^\star)}{\epsilon+a_iU_i(A_i)}
\;\longrightarrow\;
\begin{cases}
U_i(f_i^\star)/U_i(A_i) & \text{if } A_i\neq\varnothing,\\[2pt]
a_iU_i(f_i^\star)/\epsilon & \text{if } A_i=\varnothing,
\end{cases}
\qquad\text{as }\epsilon\to0.
\]
The first limit does not depend on $a_i$. The second exceeds the score of
every member holding an item for small $\epsilon$, so members with empty
bundles pick first, in decreasing order of $a_iU_i(f_i^\star)$, and only
this order depends on the scaling. When it is fixed and there are no ties,
induction over the finitely many steps shows that, for sufficiently small
$\epsilon$, every scaling produces the same allocation. \qed

\subsection{Proof of Proposition~\ref{prop:placement}}
Filtering only removes foods, so the bundles of $\hat{A}^{\mathrm{all}}$
remain disjoint, hold at most $K$ items, and satisfy
$\hat{A}^{\mathrm{all}}_i=A^{\mathrm{all}}_i\cap S_i\subseteq S_i$. Hence
$\hat{A}^{\mathrm{all}}$ is feasible for the problem that
$A^{\mathrm{safe}}$ solves, and
$W(A^{\mathrm{safe}})\geq W(\hat{A}^{\mathrm{all}})$. The argument requires
exact maximizers; with greedy allocation, the inequality is expected but not
guaranteed instance by instance. \qed

\subsection{Proof of Proposition~\ref{prop:rotation}}

\paragraph{Part (a).} Each pure strategy is a vertex of $\Delta(\Pi)$, so
$v^\star\geq v_{\mathrm{pure}}$. By the minimax
theorem~\cite[Thm.~9.2]{narahari2014game},
\[
\max_{p\in\Delta(\Pi)}\,\min_{w\in\Delta(N)}\,p^{\top}Mw
\;=\;
\min_{w\in\Delta(N)}\,\max_{p\in\Delta(\Pi)}\,p^{\top}Mw .
\]
For fixed $p$, the linear function $w\mapsto p^{\top}Mw$ attains its
minimum over the simplex at a vertex, which gives
$\min_i\sum_\pi p_\pi M_{\pi i}$; likewise, the inner maximum on the right
is $\max_\pi\sum_iw_iM_{\pi i}$. The planner's problem and its
dual~\cite[Sec.~9.4]{narahari2014game} are the linear programs
\[
\begin{array}{llll}
\max & t & \qquad\min & s\\[2pt]
\text{s.t.} & \sum_{\pi}p_\pi M_{\pi i}\geq t\quad (i\in N), & \qquad\text{s.t.} & \sum_i w_iM_{\pi i}\leq s\quad (\pi\in\Pi),\\[2pt]
 & \sum_\pi p_\pi=1,\ \ p\geq0, & & \sum_i w_i=1,\ \ w\geq0.
\end{array}
\]

\paragraph{Part (b).} Utilities are non-negative, so $v^\star\geq0$. If
$v^\star=0$, every single pair is optimal. Otherwise the constraint $t\geq0$
does not change the optimum, and adding slack variables $s_i\geq0$ gives a
program with $n+1$ equality constraints and non-negative variables. The
program is feasible and bounded, so it has an optimal basic feasible
solution with at most $n+1$ non-zero variables. One of them is
$t=v^\star>0$, so at most $n$ components of $p$ are non-zero. With stock
constraints, the slack of every non-binding stock constraint is also
non-zero, so the support is at most $n$ plus the number of binding stock
constraints.

\paragraph{Part (c).} This is complementary slackness for the primal--dual
pair in~(a). If $w^\star_i>0$, member $i$'s constraint binds,
$\sum_\pi p^\star_\pi M_{\pi i}=v^\star$. If $p^\star_\pi>0$, pair $\pi$'s
dual constraint binds,
\[
\sum_iw^\star_iM_{\pi i}=v^\star=\max_{\pi'\in\Pi}\,\sum_iw^\star_iM_{\pi' i}.
\qquad\qed
\]

\section{Additional Results}
\label{app:results}

\paragraph{Implementation details.} The utility model uses 384-dimensional all-MiniLM-L6-v2 sentence embeddings for foods and
evidence, a 128-dimensional latent space, a two-layer transformer encoder
(4 heads) with attention pooling for the retrieved evidence, and 4-head
attention for fusion. It is trained with binary cross-entropy on the
training users' second-day foods as positives, with four negatives per
positive sampled in proportion to first-day popularity raised to 0.75,
using Adam (learning rate $10^{-3}$) for 3 epochs with batch size 256; the
last epoch is kept, without a validation set. User embeddings are TF-IDF
weighted means of first-day food embeddings, with inverse document
frequencies from the training users. The retrieval corpus consists of six
dietary guideline statements, indexed by cosine similarity with top-$k=6$,
and each condition has a fixed health query. The baselines are in-house
implementations with one setting each and no tuning: 64-dimensional
embeddings, learning rate 0.005, 15 epochs, BPR with one uniform negative
for BPRMF, NeuMF, and LightGCN, and cross-entropy for GRU4Rec and SASRec.
SASRec returns the same rankings for every seed, so its training appears
not to change its scores.

\paragraph{Exact optima and EF1.} Table~\ref{tab:exact} compares greedy
allocation with exact optima computed by integer programming on the union
of the members' top 30 safe foods, for 100 held-out households per size;
every instance solves to proven optimality. For the Nash rule we compare the
smoothed objective with an upper bound from a piecewise-linear relaxation of
the logarithm. Greedy allocation attains 95--99\% of the optimum, far above
the worst-case bounds of Proposition~\ref{prop:greedy}. The utilitarian,
egalitarian, and Nash allocations and round robin are EF1 under safe-part
valuations in every household; conflict resolution is EF1 in 99\%, 98\%,
and 92\% of households for $n=2$, 3, and 4. On these four-person
households, round robin reaches a Recall@20 of 0.216, Gini 0.097, and NSW
10.47, against 0.218, 0.106, and 10.63 for Nash allocation.

\begin{table}[htbp]
\caption{Greedy allocation relative to exact optima (mean over 100 held-out
households per size, with 95\% bootstrap intervals).}
\label{tab:exact}
\centering
\small
\begin{tabular}{cccc}
\toprule
$n$ & \textbf{Utilitarian: greedy/opt.} & \textbf{Max-min: greedy/opt.} & \textbf{Nash: $\Phi_\epsilon$/upper bound} \\
\midrule
2 & 0.986 [0.982, 0.990] & 0.978 [0.973, 0.983] & 0.996 [0.995, 0.997] \\
3 & 0.975 [0.971, 0.980] & 0.957 [0.950, 0.964] & 0.990 [0.988, 0.992] \\
4 & 0.976 [0.972, 0.980] & 0.946 [0.937, 0.955] & 0.987 [0.985, 0.989] \\
\bottomrule
\end{tabular}
\end{table}

\paragraph{Shared-meal settings.} The pantry records
ingredient quantities: a stock vector $q\geq0$ over ingredients, and for
each dish $f$ a requirement vector $r(f)$, the quantity of each ingredient
needed to prepare $f$ as allocated. For an individual item this is one
serving; for a shared dish it is one batch cooked for the whole household.
A dish is in stock if $r(f)\leq q$ componentwise, and allocating it sets
$q\gets q-r(f)$. We evaluate the extension in two settings, each with 100
four-person households under each of five seeds, $K=6$, and
$\epsilon=10^{-5}$.

\emph{Indian recipes, simulated utilities.} A corpus of 3,597 Indian recipes
(911 grains and breads, 510 legumes and dals, and 2,176 sides and curries)
selected from the RecipeNLG corpus~\cite{bien2020recipenlg}. Each member is vegetarian with
probability 0.35, avoids nuts with probability 0.08, and avoids dairy with
probability 0.10; a dish violates a rule when its title or ingredient list
matches the excluded category. The corpus has no interaction histories, so
the utility model of Section~\ref{sec:utility} cannot be trained on it;
member utilities are simulated from random ingredient-affinity vectors,
correlated across the members of a household with coefficient 0.5. The
pantry stocks the 200 most frequent ingredients of the corpus with 2 to 7
units each, and a dish uses one unit of each of its ingredients.

\emph{NHANES foods, learned utilities.} The households, utilities, and
dietary rules of the main evaluation, so utilities come from the trained
utility model. Keyword matching on food descriptions identifies 171 grains
and breads and 114 legumes among the 8,704 foods. NHANES foods have no
ingredient lists, so we approximate ingredients by the 100 most frequent
words in food descriptions (2.9 per food on average) and stock each with 2
to 7 units.

In both settings, the rules are expressed as slacks $g_c(f)$ as in
Section~\ref{sec:problem}, so a dish is safe for a member exactly when it
would pass the filter of Section~\ref{sec:safety}.

\begin{algorithm}[t]
\caption{Two-stage shared-meal allocation}
\label{alg:shared_meal}
\begin{algorithmic}[1]
\Require utilities $U_i(f)$; safe sets $S_i$; stock $q$; requirements $r(\cdot)$; cap $K\geq2$; $\epsilon>0$
\Statex \textbf{Stage 1: max-min shared dishes}
\State $\mathcal{G}\gets$ grains in $\bigcap_i S_i$; \ $\mathcal{L}\gets$ dals in $\bigcap_i S_i$
\State $\Pi\gets\{(g,d)\in\mathcal{G}\times\mathcal{L}: r(g)+r(d)\leq q\}$ \Comment{componentwise}
\If{$\Pi\neq\varnothing$}
  \State $(g^\star,d^\star)\gets\arg\max_{(g,d)\in\Pi}\ \min_{i}\big(U_i(g)+U_i(d)\big)$ \Comment{exhaustive; ties: lowest index}
  \State $A_i\gets\{g^\star,d^\star\}$ for every $i$; \ $q\gets q-r(g^\star)-r(d^\star)$; \ $F_{\mathrm{rem}}\gets\cF\setminus\{g^\star,d^\star\}$
\Else
  \State $A_i\gets\varnothing$ for every $i$; \ $F_{\mathrm{rem}}\gets\cF$ \Comment{no safe shared meal exists}
\EndIf
\Statex \textbf{Stage 2: Nash-greedy individual slots}
\State $C_i\gets\{f\in F_{\mathrm{rem}}\cap S_i: r(f)\leq q\}$ for every $i$; \ $I\gets\{i:|A_i|<K,\ C_i\neq\varnothing\}$
\While{$I\neq\varnothing$}
  \State $f_i^\star\gets\arg\max_{f\in C_i}U_i(f)$ for every $i\in I$
  \State $i^\star\gets\arg\max_{i\in I}U_i(f_i^\star)/\big(\epsilon+U_i(A_i)\big)$ \Comment{ties: lowest index}
  \State $A_{i^\star}\gets A_{i^\star}\cup\{f^\star_{i^\star}\}$; \ $F_{\mathrm{rem}}\gets F_{\mathrm{rem}}\setminus\{f^\star_{i^\star}\}$; \ $q\gets q-r(f^\star_{i^\star})$
  \State recompute $C_i$ for every $i$ and $I$
\EndWhile
\State \Return $A$
\end{algorithmic}
\end{algorithm}

Table~\ref{tab:scaling} reports all metrics of the household-size sweep
plotted in Fig.~\ref{fig:scaling}, and Table~\ref{tab:worked_example} gives
the household-level example referred to in Section~\ref{sec:shared_meals}.

\begin{table}[htbp]
\caption{Household allocation across household sizes $n=2$--$6$ (100
held-out households per size). CSR is 1.00 in every row. Items is the mean
number of foods each member receives; strategies A and C--E always fill all
$K=20$ slots. Bootstrap confidence intervals are released with the code.}
\label{tab:scaling}
\centering
\setlength{\tabcolsep}{4.5pt}
\begin{tabular}{clccccccc}
\toprule
\textbf{$n$} & \textbf{Strategy} & \textbf{Recall@20} & \textbf{NDCG@20} & \textbf{PUR} & \textbf{Min Util.} & \textbf{Gini} & \textbf{NSW} & \textbf{Items} \\
\midrule
2 & A) Independent & 0.292 & 0.292 & 0.187 & 11.37 & 0.072 & 12.96 & 20.0 \\
2 & B) Conflict resolution & 0.252 & 0.263 & 0.142 & 8.92 & 0.105 & 10.82 & 15.1 \\
2 & C) Utilitarian & 0.264 & 0.270 & 0.184 & 10.34 & 0.086 & 12.16 & 20.0 \\
2 & D) Max-Min & 0.260 & 0.268 & 0.185 & 10.84 & 0.061 & 12.13 & 20.0 \\
2 & E) Nash & 0.267 & 0.273 & 0.185 & 10.75 & 0.072 & 12.30 & 20.0 \\
\midrule
3 & A) Independent & 0.283 & 0.291 & 0.279 & 10.36 & 0.096 & 12.87 & 20.0 \\
3 & B) Conflict resolution & 0.203 & 0.232 & 0.182 & 6.81 & 0.152 & 9.47 & 13.1 \\
3 & C) Utilitarian & 0.221 & 0.242 & 0.270 & 8.64 & 0.122 & 11.33 & 20.0 \\
3 & D) Max-Min & 0.215 & 0.242 & 0.270 & 9.50 & 0.078 & 11.22 & 20.0 \\
3 & E) Nash & 0.219 & 0.246 & 0.271 & 9.36 & 0.097 & 11.52 & 20.0 \\
\midrule
4 & A) Independent & 0.299 & 0.308 & 0.370 & 9.33 & 0.107 & 12.40 & 20.0 \\
4 & B) Conflict resolution & 0.206 & 0.237 & 0.215 & 5.17 & 0.187 & 8.31 & 11.7 \\
4 & C) Utilitarian & 0.219 & 0.245 & 0.355 & 7.38 & 0.140 & 10.43 & 20.0 \\
4 & D) Max-Min & 0.207 & 0.239 & 0.358 & 8.42 & 0.082 & 10.23 & 20.0 \\
4 & E) Nash & 0.218 & 0.248 & 0.359 & 8.22 & 0.106 & 10.63 & 20.0 \\
\midrule
5 & A) Independent & 0.274 & 0.283 & 0.466 & 9.05 & 0.116 & 12.62 & 20.0 \\
5 & B) Conflict resolution & 0.171 & 0.204 & 0.250 & 4.18 & 0.217 & 7.73 & 10.8 \\
5 & C) Utilitarian & 0.184 & 0.212 & 0.435 & 6.64 & 0.160 & 10.10 & 20.0 \\
5 & D) Max-Min & 0.173 & 0.207 & 0.437 & 8.07 & 0.088 & 9.94 & 20.0 \\
5 & E) Nash & 0.184 & 0.216 & 0.438 & 7.84 & 0.117 & 10.42 & 20.0 \\
\midrule
6 & A) Independent & 0.282 & 0.288 & 0.563 & 8.69 & 0.117 & 12.60 & 20.0 \\
6 & B) Conflict resolution & 0.162 & 0.195 & 0.271 & 3.49 & 0.233 & 6.97 & 9.7 \\
6 & C) Utilitarian & 0.171 & 0.201 & 0.508 & 6.00 & 0.174 & 9.51 & 20.0 \\
6 & D) Max-Min & 0.160 & 0.195 & 0.512 & 7.50 & 0.082 & 9.25 & 20.0 \\
6 & E) Nash & 0.171 & 0.204 & 0.513 & 7.25 & 0.120 & 9.86 & 20.0 \\
\bottomrule
\end{tabular}
\end{table}

Table~\ref{tab:worked_example} shows how the two allocations differ for one
held-out NHANES household with learned utilities. Member~4 has hypertension
and the others have no condition. Under single servings, no member receives
both a grain and a legume; member~4 receives toasted wheat bread but no
legume. Structured shared allocation cooks white bread and pinto beans once,
both safe for every member including member~4's sodium limit, so every
member gets a complete meal and pantry use falls from 14.35\% to 10.71\%.
Each member's four individual foods are exactly their first four
single-serving picks, so the shared dishes replace each member's two
lowest-ranked foods. Members value the shared dishes at 0.14--0.23. Members
1, 2, and 4 lose 1.4--1.6 in utility, while member~3, whose two
lowest-ranked foods were worth little (0.74 and 0.28), loses 0.65, so the
spread of totals narrows from 4.74--5.85 to 4.09--4.33.

\begin{table}[htbp]
\caption{Worked example: one held-out NHANES household with learned
utilities, $K=6$. Single-serving allocation gives each member six foods;
structured shared allocation cooks white bread and pinto beans once for all
members and adds four individual foods. Totals are bundle utilities on the
min-max-normalized scale. Both shared dishes are safe for every member.}
\label{tab:worked_example}
\centering
\small
\begin{tabular}{@{}l >{\raggedright\arraybackslash}p{5.0cm} c >{\raggedright\arraybackslash}p{4.7cm} c@{}}
\toprule
\textbf{Member} & \textbf{Single-serving foods} & \textbf{Total} & \textbf{Shared dishes (utility) + individual foods} & \textbf{Total} \\
\midrule
1 (none) & Baby oatmeal cereal, sweetened tea, fruit yogurt, coffee cake, barbecue chicken, corn puffs & 5.84 & Bread 0.20, beans 0.14; oatmeal cereal, tea, fruit yogurt, coffee cake & 4.28 \\
2 (none) & Froot Loops, yeast doughnut, mozzarella, hamburger, cola, chocolate sandwich cookie & 5.57 & Bread 0.23, beans 0.16; Froot Loops, doughnut, mozzarella, hamburger & 4.19 \\
3 (none) & Low-fat fruit yogurt, whole milk, salted pretzels, cheese pizza, French fries, banana & 4.74 & Bread 0.22, beans 0.16; yogurt, whole milk, pretzels, cheese pizza & 4.09 \\
4 (hypertension) & Bacon, fried eggs, French dressing, tap water, toasted wheat bread, heavy cream & 5.85 & Bread 0.23, beans 0.17; bacon, fried eggs, French dressing, tap water & 4.33 \\
\midrule
\multicolumn{5}{@{}l}{Complete meals: 0 of 4 (single-serving) vs.\ 4 of 4 (shared). Pantry used: 14.35\% vs.\ 10.71\%.} \\
\bottomrule
\end{tabular}
\end{table}

\end{document}